\documentclass[twoside]{article}
\usepackage[accepted]{aistats2024}

\setlength{\pdfpageheight}{11in}
\setlength{\pdfpagewidth}{8.5in}

\usepackage[round]{natbib}

\usepackage[utf8]{inputenc} 
\usepackage[T1]{fontenc}    
\usepackage{hyperref}       
\usepackage{url}            
\usepackage{subfigure}
\usepackage{booktabs}       
\usepackage{amsfonts}       
\usepackage{nicefrac}       
\usepackage{microtype}      
\usepackage{xcolor}         

\usepackage{mathtools}
\usepackage{float}
\usepackage{amsmath,amsfonts,amssymb,amsthm}
\usepackage{thmtools,thm-restate}
\usepackage{bm}
\usepackage{enumitem,array}
\usepackage{algorithmic}
\usepackage{algorithm}
\usepackage{bbm}
\usepackage{tabulary,multirow}
\usepackage{dsfont}
\usepackage{thmtools,thm-restate}
\usepackage[capitalize,noabbrev]{cleveref}

\usepackage{multicol}
\usepackage{authblk}
\usepackage{upgreek}
\usepackage{tabularx}
\usepackage{lipsum}
\usepackage{enumitem}
\usepackage{authblk}

\usepackage[font=small,labelfont=bf]{caption}

\usepackage{wrapfig}
\usepackage{lipsum}

\usepackage[frozencache,cachedir=.]{minted}
\definecolor{LightGray}{gray}{0.9}

\usepackage[acronym]{glossaries}
\hypersetup{%
    colorlinks=true,
    linkcolor=mydarkblue,
    citecolor=mydarkblue,
    filecolor=mydarkblue,
    urlcolor=mydarkblue,
}
\definecolor{mydarkblue}{rgb}{0,0.08,0.45}
\makeglossaries
\newacronym{AI}{AI}{Artificial Intelligence}
\newacronym{IID}{IID}{Independent-identically-distributed}
\newacronym{OOD}{OOD}{Out-of-Distribution}
\newacronym{DNN}{DNN}{Deep Neural Network}
\newacronym{ERM}{ERM}{Empirical Risk Minimization}
\newacronym{MLE}{MLE}{Maximum Likelihood Estimation}
\newacronym{BNN}{BNN}{Bayesian Neural Network}
\newacronym{SNGP}{SNGP}{Spectral-normalized Neural Gaussian Process}
\newacronym{GP}{GP}{Gaussian Process}
\newacronym{DUE}{DUE}{Deterministic Uncertainty Estimation}
\newacronym{NatPN}{NatPN}{Natural Posterior Network}
\newacronym{MC}{MC}{Monte-Carlo}
\newacronym{MFVI}{MFVI}{Mean-Field Variational Inference}
\newacronym{EDL}{EDL}{Evidential Deep Learning}
\newacronym{NLL}{NLL}{Negative log-likelihood}
\newacronym{RMSE}{RMSE}{Root Mean Square Error}
\newacronym{SOTA}{SOTA}{State-of-the-art}

\theoremstyle{plain}
\newtheorem{theorem}{Theorem}[section]

\newtheorem{lemma}[theorem]{Lemma}
\newtheorem{corollary}[theorem]{Corollary}
\theoremstyle{definition}
\newtheorem{definition}[theorem]{Definition}

\theoremstyle{remark}
\newtheorem{remark}[theorem]{Remark}

\usepackage[textsize=tiny]{todonotes}

\usepackage{amssymb}
\usepackage{pifont}
\newcommand{\cmark}{\ding{51}}%
\newcommand{\xmark}{\ding{55}}%

\begin{document}
\runningtitle{Efficient and Distance-Aware Deep Regressor for Uncertainty Estimation under Distribution Shifts}
\twocolumn[\aistatstitle{Density-Regression: Efficient and Distance-Aware Deep Regressor for Uncertainty Estimation under Distribution Shifts}
\aistatsauthor{ Ha Manh Bui \And Anqi Liu }
\aistatsaddress{ Department of Computer Science, Johns Hopkins University, Baltimore, MD, USA } 
]
\begin{abstract}
Morden deep ensembles technique achieves strong uncertainty estimation performance by going through multiple forward passes with different models. This is at the price of a high storage space and a slow speed in the inference (test) time. To address this issue, we propose Density-Regression, a method that leverages the density function in uncertainty estimation and achieves fast inference by a single forward pass. We prove it is distance aware on the feature space, which is a necessary condition for a neural network to produce high-quality uncertainty estimation under distribution shifts. Empirically, we conduct experiments on regression tasks with the cubic toy dataset, benchmark UCI, weather forecast with time series, and depth estimation under real-world shifted applications. We show that Density-Regression has competitive uncertainty estimation performance under distribution shifts with modern deep regressors while using a lower model size and a faster inference speed.
\end{abstract}
\section{Introduction}
Improving the uncertainty quality of \acrfull{DNN} is crucial in high-stakes \acrfull{AI} applications in real-world applications~\citep{tran2022plex,nado2021uncertainty}. For example, in regression tasks like predicting temperature in weather forecasts, depth estimation in medical diagnosis, and autonomous driving, accurate uncertainty hinges on the calibration property~\citep{guo2017on}, i.e., the frequency of realizations below specific quantiles must match the respective quantile levels~\citep{kuleshov2018accurate}. Furthermore, the forecast must exhibit an appropriate level of sharpness, i.e., concentrated around the realizations and leveraging the information in the inputs effectively~\citep{kuleshov2022sharpness}.

\begin{table}[t!]
    \centering
    \scalebox{0.85}{
    \begin{tabular}{cccc}\\
    \toprule  
    Method & $\begin{matrix}
    \text{Uncertainty} \\
    \text{quality}
    \end{matrix}$ & $\begin{matrix}
    \text{Test-time} \\
    \text{efficiency}
    \end{matrix}$ & $\begin{matrix}
    \text{Without prior} \\
    \text{requirement}
    \end{matrix}$ \\\midrule
    Deterministic & \xmark & \cmark & \cmark  \\
    Bayesian & \cmark & \cmark & \xmark  \\
    Ensembles & \cmark & \xmark & \cmark\\
    Ours & \cmark & \cmark & \cmark  \\
    \bottomrule
    \end{tabular}}
    \caption{A comparison between methods in terms of uncertainty quality (calibration \& sharpness), test-time efficiency (lightweight \& fast), and whether pre-defined prior hyper-parameters are required.}
    \label{tab:teaser}
\end{table}

\begin{figure*}[t!]
\begin{center}
\includegraphics[width=1.0\linewidth]{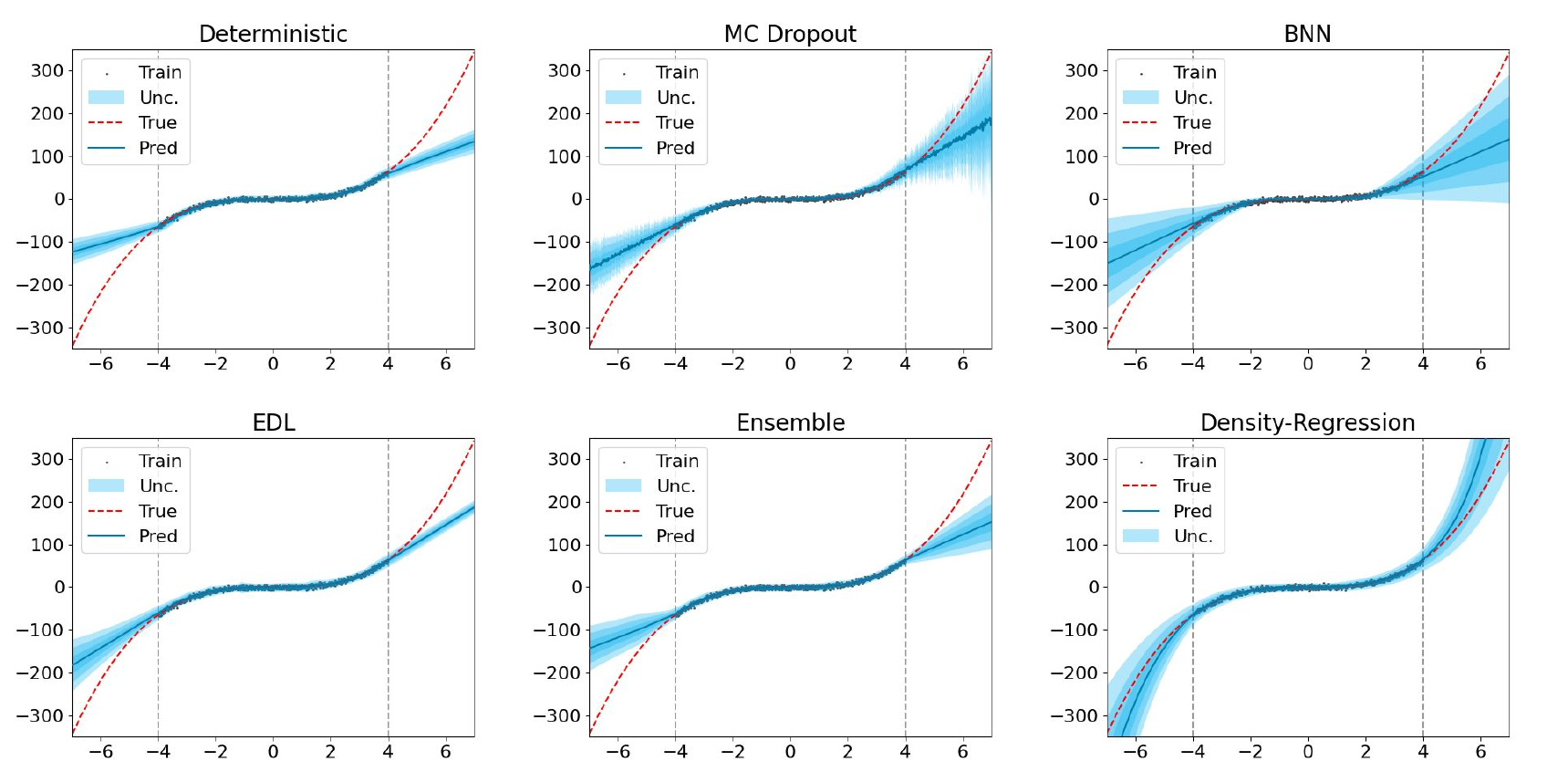}
\end{center}
  \caption{Predictive distributions for the toy dataset $y=x^3 + \epsilon$, $\epsilon \sim \mathcal{N}(0,3^2)$. The \textcolor{gray}{gray dots} in the area between \textcolor{gray}{two vertical dashed lines} represent observations in training, the \textcolor{red}{red dashed line} represents the true data-generating function, and the \textcolor{blue}{blue line} represents the mean predictions in which \textcolor{blue}{blue areas} correspond to $\pm3$ standard deviation around the mean. Our Density-Regression achieves distance awareness and, therefore, can improve distribution calibration by confident \& sharp predictions on \acrshort{IID} training data and decreased certainty and sharpness when the \acrshort{OOD} data is far from the training set. \textit{A quick demo is available at \href{https://colab.research.google.com/drive/1p5gK-rOI4XYgg2zTVtbh-5Ky06PGlA09?usp=sharing}{this Google Colab link}.}}
\label{fig:demo}
\end{figure*}

However, modern Deterministic~\acrshort{DNN} is often over-confident, especially under distribution shifts in real-world applications~\citep{tran2022plex, minderer2021revisiting, bui2021exploiting}. For instance, a Deterministic~\acrshort{DNN} trained with in-door images but deployed in outdoor scenes will result in sharp but un-calibrated predictions. Sampling-based approaches such as \acrfull{GP}, \acrfull{BNN}, \acrfull{MC}~Dropout, and Deep Ensembles can reduce over-confidence~\citep{koh2021wilds,lakshminarayanan2017ensemble,Chen2016robust}. However, such approaches usually have high computational demands by using multiple forward passes (or merging predictions from multiple models with Ensembles) at inference (test) time. 

To mitigate the inefficiency issue, sampling-free approaches, including Quantile Regression~\citep{romano2019conformalized}, \acrfull{SNGP}~\citep{Liu2020SNGP}, \acrfull{DUE}~\citep{vanamersfoort2022feature}, and closed-form posteriors in Bayesian inference (e.g., \acrfull{EDL}~\citep{sensoy2018evidential}, \acrfull{NatPN}~\citep{charpentier2022natural}), have been proposed. The Bayesian approach, however, requires pre-defined prior hyper-parameters, which are often sensitive and unknown in the real world. In addition, these aforementioned approaches still under-perform relative to sampling-based techniques and require a higher computational demand than the Deterministic~\acrshort{DNN} (e.g., Tab.~\ref{tab:teaser}). 

Towards a model enhancing uncertainty quality, test-time efficiency, and not requiring any pre-defined prior hyper-parameter, we propose Density-Regression. Our framework includes three main components: a feature extractor, a density function on feature space, and a regressor. The key component is the density function. By combining its likelihood value as an in-out-distribution detector on the feature space, the regressor achieves improved predictive uncertainty under distribution shifts. At the same time, it preserves the level of test-time efficiency of Deterministic~\acrshort{DNN}.

\textbf{Our contributions can be summarized as}:
\begin{enumerate} [leftmargin=13pt,topsep=0pt,itemsep=0mm]
    \item We introduce Density-Regression, a novel deterministic framework that improves \acrshort{DNN} uncertainty by a combination of density function with the regressor. Density-Regression is fast and lightweight. It is able to produce reliable predictions under distribution shifts, and can be implemented efficiently and easily across \acrshort{DNN} architectures. 
    \item We develop a rigorous theoretical connection for our framework, and formally prove that it is distance-aware on the feature space, i.e., its associated uncertainty metrics are monotonic functions of feature distance metrics. This is an important property to help \acrshort{DNN} improve calibration, however, it is often not guaranteed for typical \acrshort{DNN} models~\citep{Liu2020SNGP} (e.g., Fig.~\ref{fig:demo}). 
    \item We empirically show our framework achieves competitive uncertainty estimation quality with \acrfull{SOTA} across different tasks, including the cubic dataset, time-series weather forecast, UCI benchmark dataset, and monocular depth estimation. Importantly, it requires only a single forward pass and a lightweight feature density function. Therefore, it has fewer parameters and is much faster than other baselines at test time.
\end{enumerate}
\section{Background}\label{sec:background}
\subsection{Preliminaries}
\textbf{Notation and Problem setting.} Let $\mathcal{X}$ and $\mathcal{Y}$ be the sample and label space. Denote the set of joint probability distributions on $\mathcal{X} \times \mathcal{Y}$ by $\mathcal{P}_{\mathcal{X} \times \mathcal{Y}}$. A dataset is defined by a joint distribution $\mathbb{P}(x,y) \in \mathcal{P}_{\mathcal{X} \times \mathcal{Y}}$, and let $\mathcal{P}$ be a measure on $\mathcal{P}_{\mathcal{X} \times \mathcal{Y}}$, i.e., whose realizations are distributions on $\mathcal{X} \times \mathcal{Y}$. Denote the training set by $D_s = \{(x_s^i, y_s^i)\}_{i=1}^{n_s}$, where $n_s$ is the number of data points in $D_s$, s.t., $(x_s, y_s) \sim \mathbb{P}_s(x,y)$ and $\mathbb{P}_s(x,y) \sim \mathcal{P}$. In the standard learning setting, a learning model that is only trained on $D_s$, arrives at a good generalization performance on the test set $D_t = \{(x_t^i, y_t^i)\}_{i=1}^{n_t}$, where $n_t$ is the number of data points in $D_t$, s.t., $(x_t, y_t) \sim \mathbb{P}_t(x,y)$ and $\mathbb{P}_t(x,y) \sim \mathcal{P}$. In the \acrfull{IID} setting, $\mathbb{P}_{t}(x,y)$ is similar to $\mathbb{P}_s(x,y)$, and let us use $\mathbb{P}_{iid}(x,y)$ to represent the \acrshort{IID} test distribution. In contrast, $\mathbb{P}_{t}(x,y)$ is different with $\mathbb{P}_s(x,y)$ if $D_t$ is \acrfull{OOD} data, and let us use $\mathbb{P}_{ood}(x,y)$ to represent the \acrshort{OOD} test distribution.

\textbf{Deterministic Regression.} In the regression setting of representation learning, we predict a target $y \in \mathcal{Y}$, where $\mathcal{Y} = \mathbb{R}$ is continuous by using a forecast $h = g \circ f$ which composites a features extractor $f: \mathcal{X} \rightarrow \mathcal{Z}$, where $\mathcal{Z}$ is feature space, a regressor $g: \mathcal{Z} \rightarrow  \mathcal{Y}$ which outputs a predicted output over $\mathcal{Y}$. In Deterministic~\acrshort{DNN}, we often aim to learn a function $h$ by minimizing the Mean Squared Error
\begin{align}
    \min_{\theta_{g,f}}\left\{\mathbb{E}_{(x,y)\sim D_s}\left[\frac{1}{2} ||y - g(f(x))||^2\right]\right\},
\end{align}
where $\theta_{g,f}$ is the parameter of encoder $f$ and regressor $g$. The model $h$ is optimized such that it can learn the average correct answer for a given input. Yet, there is no uncertainty estimation in this model when making the prediction~\citep{chua2018deepRL,tran2020methods}.

\textbf{Deterministic~Gaussian~\acrshort{DNN}.} To tackle this issue, one probabilistic approach is based on the assumption that the true label $y$ is distributed according to a normal distribution with a true mean $\mu(x)$ and some noise with the variance $\sigma^2(x)$~\citep{chua2018deepRL}. Under this assumption, we can simplify the model to a probabilistic function $h$ to infer $(\mu,\sigma^2)$ by making $g: \mathcal{Y} \rightarrow \mathbb{R}^2$, then optimize function $h$ by \acrfull{MLE}
\begin{align}\label{eq:MLE}
    \min_{\theta_{g,f}}\{\mathbb{E}_{(x,y)\sim D_s}[-\log p\left(y|g(f(x))\right)\\\nonumber := \frac{1}{2} \log(2\pi\sigma^2) + \frac{(y-\mu)^2}{2\sigma^2} ]\}.
\end{align}
This approach, however, only helps the model $h$ provide data uncertainty when making predictions since it estimates the underlying noise in the data (i.e., aleatoric uncertainty), and the model uncertainty (i.e., epistemic uncertainty) is ignored~\citep{tran2020methods}.

\textbf{Sampling-based Regression.} The sampling-based approaches, i.e., make an inference by multiple forward passes (or merging predictions from multiple models), such as \acrshort{BNN}, \acrshort{MC}~Dropout, and Deep Ensembles can model the model uncertainty by predicting
\begin{align}\label{eq:sampling-based}
    &\mu(x)=\frac{1}{M}\sum_{i=1}^M \mu_{\theta_i}(x),\\
    &\sigma^2(x) = \frac{1}{M}\sum_{i=1}^M \left[\left(\mu(x) - \mu_{\theta_i}(x))^2 + \sigma^2_{\theta_i}(x\right)\right],
\end{align}
where $\theta_i$ represents $i$-th model's parameters. However, this approach struggles with computational cost by requiring multiple (i.e., $M$) model inferences. 

\subsection{Evaluating Uncertainty}
\textbf{Calibrated Regression.} First, we present the definition of distribution calibration for the forecast $h$ under the regression setting by:
\begin{definition}~\citep{gneiting2007probabilistic}\label{def:calibration}
A forecast $h$ is said to be {\bf distributional calibrated} if and only if
\begin{equation}
    \mathbb{P}(Y \leq F_x^{-1}(p)) = p, \forall p \in [0,1],
\end{equation}
where we use $F_x: \mathcal{Y} \rightarrow [0,1]$ to denote the CDF of forecast $h(x)$ at $x$, hence $F_i^{-1}: [0,1] \rightarrow \mathcal{Y}$ means the quantile function $F_i^{-1}(p) = \inf \{y: p \leq F_i(y)\}$.
\end{definition}
Intuitively, this means that a $p$ confidence interval contains the target $y$ $p$ of the time. This definition also implies that
$\left(\sum_{i=1}^n \mathbb{I}\{F_i^{-1}(p_1)\leq y_i \leq F_i^{-1}(p_2)\}\right)/n \rightarrow p_2 - p_1$,
for all $p_1, p_2 \in [0,1]$ as $n\rightarrow \infty$. Under this confidence intervals intuition, \citet{kuleshov2018accurate} propose to measure the calibration error as a numerical score describing the quality of forecast calibration
\begin{align}
     &cal(\{F_i,y_i\}_{i=1}^n) \notag\\&:= \sum_{j=1}^m \left(p_j - \frac{\left|\{y_i|F_i(y_i) \leq p_j, i=1,\cdots,n\}\right|}{n}\right)^2,
\end{align}
for each threshold $p_j$ from the chosen of $m$ confidence level $0\leq p_1<p_2<\cdots<p_m\leq 1$. 

\textbf{Sharpness.} Calibration, however, is only a necessary condition for good uncertainty estimation. For example, a well-calibrated model could still have large confidence intervals, which is inherently less useful than a well-calibrated one with small confidence intervals. Therefore, another condition is that the forecast $h$ must also be sharp. Intuitively, this means that the confidence intervals should be as tight as possible, i.e., $var(F_i)$ of the random variable whose CDF is $F_i$ to be small. Formally, the sharpness score~\citep{tran2020methods} follows
\begin{align}
    sha(F_1,\cdots,F_n) := \sqrt{\frac{1}{n} \sum_{i=1}^n var(F_i)}.
\end{align}

\textbf{Distance Awareness.} One of the approaches to help the Deterministic~Gaussian~\acrshort{DNN} provide model uncertainty is making it achieve distance awareness. This is an important property to improve model uncertainty under distributional shifts~\citep{Liu2020SNGP,vanamersfoort2022feature,bui2023densitysoftmax}. It was introduced by~\citet{Liu2020SNGP}, and on the feature space $\mathcal{Z}$, we can define distance awareness as follows:
\begin{definition}\label{def:distanceaware} The forecast $h(x_t)$ on the new test feature $z_{t}=f(x_t)$, is said to be \textbf{feature distance-aware} if there exists $u(z_{t})$, a summary statistics of $h(x_t)$, that quantifies model uncertainty (e.g., entropy, predictive variance, etc.) and reflects the distance between $z_{t}$ and the features random variable on the training data $Z_s$ w.r.t. a metric $\left \| \cdot \right \|_{\mathcal{Z}}$, i.e., $u(z_{t}) := v(d(z_{t},Z_s))$, where $v$ is a monotonic function and $d(z_{t},Z_s) := \mathbb{E}\left \| z_{t} - Z_s \right \|_{\mathcal{Z}}$ is the distance between $z_{t}$ and $Z_s$.
\end{definition}
Following Def.~\ref{def:distanceaware}, if a model achieves the distance-aware property, model uncertainty quality would be improved and the over-confidence issues of current \acrshort{DNN} on \acrshort{OOD} would be reduced. While, at the same time, it will still preserve certainty predictions for the \acrshort{IID} test example, suggesting calibration and sharpness improvement.

\subsection{Test-time Efficiency}
\textbf{Latency.} To deploy in real-world applications, a necessary condition is that the model must infer fast at the test-time. This is especially important in high-stakes applications such as autonomous driving when the car needs to react in sudden circumstances.

\textbf{Parameters.} Additionally, to make the model scalable, the model also needs to be as lightweight as possible to be installed with low-resource hardware. That said, current \acrshort{SOTA} sampling-based approaches that can improve uncertainty quantification are still struggling with these two criteria~\citep{tran2022plex,nado2021uncertainty,bui2023densitysoftmax}.
\section{Density-Regression}
\begin{figure}[ht!]
    \centering
    \includegraphics[width=1.0\linewidth]{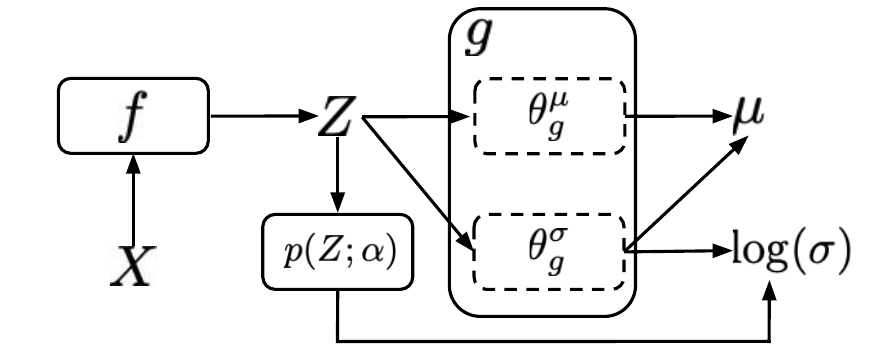}
    \caption{The overall architecture of Density-Regression, including encoder $f$, regressor $g$, and density function $p(Z;\alpha)$. Solid rectangle boxes represent these functions. Dashed rectangle boxes represent function weights. Three training steps and inference process follow Alg.~\ref{alg:algorithm}}.
    \label{fig:framework}
\end{figure}
\subsection{Exponential Family Distribution}
To improve the uncertainty quality and test-time efficiency of \acrshort{DNN} in regression under distribution shifts, we propose the Density-Regression framework in Fig.~\ref{fig:framework}. First, recall that when $\mathcal{Y}$ is the label space and $\Phi(x,y)$ is the sufficient statistic of the joint distribution $\mathbb{P}(x,y)$ associated with $(x,y)\in \mathcal{X}\times \mathcal{Y}$, from Lemma~\ref{lemma:exp} in our Appendix~\ref{apd:proof}, we have the predictive distribution $p(y|x;\theta)$ follows
\begin{align}
    p(y|x;\theta)=\frac{\exp(\eta(\theta_g)^\top\Phi(f(x),y))}{\int_{y'\in \mathcal{Y}}\exp\left(\eta(\theta_g)^\top\Phi(f(x),y')\right)dy'},
\end{align}
where $\eta$ is the natural function for the parameter $\theta$, $\theta = (\theta_g, \theta_f)$ is obtained by maximizing the \acrshort{MLE} w.r.t. the conditional log-likelihood
\begin{align}
    \theta = \underset{\theta}{\arg\max} \mathbb{E}_{x,y\sim p(x,y)}[\log p(y|x;\theta)].
\end{align}

Motivated by the idea of using the density function to improve uncertainty estimation of \acrshort{DNN} under distribution shifts~\citep{bui2023densitysoftmax,charpentier2020postnet,kuleshov2022sharpness}, in this paper, we integrate the density function into our Density-Regression's framework by
\begin{align}\label{eq:exp}
    p(y|x;\theta) = \frac{\exp\left(-p(z;\alpha)\theta_{g}^\top \Phi(z,y)\right)}{\int_{y'\in \mathcal{Y}}\exp\left(-p(z;\alpha)\theta_{g}^\top \Phi(z,y')\right)dy'}, 
\end{align}
where $z = f(x)$ and the density function $p(z;\alpha) \in (0,\infty]$.
The density $p(z;\alpha)$ modulates the distribution's certainty. When the density goes to zero, the estimator $p(y|x;\theta)$ becomes uncertain. As the density goes toward infinity, the estimator converges to a deterministic point estimate. When Density-Regression follows a Gaussian distribution, we obtain the mean and variance by the following theorem:
\begin{theorem}\label{theo:gauss} If the predictive distribution follows Eq.~\ref{eq:exp} and the sufficient statistic has the form $\Phi(z,y) = \begin{bmatrix}zy^2 & y^2 & 2zy & 2y & z & 1\end{bmatrix},$ then Density-Regression has the conditional Gaussian distribution as follows $p(y|x;\theta) \sim \mathcal{N}(\mu(x,\theta),\sigma^2(x,\theta)),$
where
    \begin{align*}
        &\mu(x,\theta) = -\left(\theta_g^\sigma\begin{bmatrix}
            z\\1
        \end{bmatrix}\right)^{-1}\left(\theta_g^\mu{\begin{bmatrix}
            z\\1
        \end{bmatrix}}\right),\\
        &\sigma^2(x,\theta) = \left(2\cdot p(z;\alpha)\theta_g^\sigma\begin{bmatrix}
            z\\1
        \end{bmatrix}\right)^{-1},
    \end{align*}
    where $z=f(x)$ and $\theta_{g}^\mu$ and $\theta_{g}^\sigma$ are the parameters (model weights) of the regressor $g$, i.e., $(\theta_{g}^\mu,\theta_{g}^\sigma)=\theta_g$ . The proof is in Apd.~\ref{proof:theo:gauss}.
\end{theorem}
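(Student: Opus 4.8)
The plan is to show that the exponent $-p(z;\alpha)\,\theta_g^\top\Phi(z,y)$ appearing in Eq.~\ref{eq:exp} is, after substituting the stated sufficient statistic, a \emph{quadratic polynomial in the scalar $y$} whose leading and linear coefficients factor through $\begin{bmatrix} z \\ 1\end{bmatrix}$; completing the square then exhibits the normalized density as a Gaussian and lets us read off $\mu$ and $\sigma^2$. First I would write $z=f(x)\in\mathbb{R}^d$ and partition $\theta_g$ conformally with the six blocks of $\Phi(z,y)=\begin{bmatrix} zy^2 & y^2 & 2zy & 2y & z & 1\end{bmatrix}$, say $\theta_g=\begin{bmatrix} a & b & c & d & e & f_0\end{bmatrix}$ with $a,c,e\in\mathbb{R}^d$ and $b,d,f_0\in\mathbb{R}$ (I write $f_0$ to avoid clashing with the encoder $f$). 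Expanding the inner product and collecting powers of $y$ yields
\begin{equation*}
\theta_g^\top\Phi(z,y)=(a^\top z+b)\,y^2+2(c^\top z+d)\,y+(e^\top z+f_0).
\end{equation*}

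The key observation is that the quadratic and linear coefficients are affine in $z$, so they can be written as $a^\top z+b=\theta_g^\sigma\begin{bmatrix} z \\ 1\end{bmatrix}$ and $c^\top z+d=\theta_g^\mu\begin{bmatrix} z \\ 1\end{bmatrix}$ upon defining $\theta_g^\sigma:=\begin{bmatrix} a^\top & b\end{bmatrix}$ and $\theta_g^\mu:=\begin{bmatrix} c^\top & d\end{bmatrix}$; this is precisely the split $(\theta_g^\mu,\theta_g^\sigma)=\theta_g$ named in the statement. Abbreviating $A:=\theta_g^\sigma\begin{bmatrix} z \\ 1\end{bmatrix}$ and $B:=\theta_g^\mu\begin{bmatrix} z \\ 1\end{bmatrix}$, I would then note the $y^0$ term $e^\top z+f_0$ is independent of $y$ and therefore cancels between the numerator and the normalizing integral of Eq.~\ref{eq:exp}, leaving $p(y\mid x;\theta)\propto\exp\!\big(-p(z;\alpha)A\,y^2-2p(z;\alpha)B\,y\big)$.

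It remains to match this against the canonical Gaussian exponent $-\tfrac{1}{2\sigma^2}y^2+\tfrac{\mu}{\sigma^2}y+\text{const}$. Equating the $y^2$ coefficients gives $\sigma^2=\big(2\,p(z;\alpha)A\big)^{-1}$, the claimed variance, and equating the $y$ coefficients gives $\mu/\sigma^2=-2p(z;\alpha)B$, whence $\mu=-B/A$ after substituting $\sigma^2$, the claimed mean. Since $p(z;\alpha)\in(0,\infty]$ by construction, the step that actually needs care is normalizability: the integral in Eq.~\ref{eq:exp} converges to a genuine Gaussian only when the leading coefficient is positive, i.e.\ when $\theta_g^\sigma\begin{bmatrix} z \\ 1\end{bmatrix}>0$, which simultaneously guarantees $\sigma^2>0$. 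I would therefore flag this positivity as the operative condition (in practice enforced by a positivity constraint such as a softplus on the variance head); beyond it the argument is just the routine completing-the-square bookkeeping of aligning the six-block inner product with the scalar quadratic. The one thing to watch is keeping the vector blocks $zy^2,2zy,z$ distinct from the scalar blocks $y^2,2y,1$ when forming $\theta_g^\top\Phi$, since a mismatched partition would scramble which block feeds $A$ versus $B$.
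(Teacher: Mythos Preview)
Your proposal is correct and follows essentially the same route as the paper: expand $\theta_g^\top\Phi(z,y)$ into a quadratic in $y$, group the $y^2$ and $y$ coefficients as $\theta_g^\sigma\begin{bmatrix} z\\1\end{bmatrix}$ and $\theta_g^\mu\begin{bmatrix} z\\1\end{bmatrix}$, multiply through by $-p(z;\alpha)$, and match against the canonical Gaussian exponent to read off $\mu$ and $\sigma^2$. Your handling of the $y^0$ term (it cancels in normalization) is in fact cleaner than the paper's, which silently replaces $[\theta_5\ \theta_6]\begin{bmatrix} z\\1\end{bmatrix}$ by the perfect-square constant before computing the log-normalizer; and your explicit flagging of the positivity condition $\theta_g^\sigma\begin{bmatrix} z\\1\end{bmatrix}>0$ for integrability is a point the paper's proof leaves implicit.
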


The mean and variance in Theorem~\ref{theo:gauss} are hard to optimize with \acrshort{MLE} since they are undefined for non-positive $\theta_g^\sigma\begin{bmatrix}z,1\end{bmatrix}^\top$. To avoid this issue, we can rewrite the mean and the variance by the Corollary below:
\begin{corollary}\label{corol:gauss} Given the mean and variance in Theorem~\ref{theo:gauss}, we can rewrite as follows
\begin{align*}
    &\mu(x,\theta) = \sigma^2(z,\theta)\left(-2 \cdot p(z;\alpha)\theta_g^\mu{\begin{bmatrix}z\\1\end{bmatrix}}\right),\\
    &\sigma^2(x,\theta) = \exp\left(-\frac{1}{2}\left[\log(2)+\log(p(z;\alpha))+\theta_g^\sigma\begin{bmatrix}z\\1\end{bmatrix}\right]\right)^2
    \end{align*}
    , where $z=f(x)$. The proof is in Apd.~\ref{proof:corol:gauss}.
\end{corollary}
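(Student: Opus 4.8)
The plan is to handle the two identities separately, because the mean is an exact algebraic consequence of Theorem~\ref{theo:gauss} whereas the variance is a reparametrization introduced to cure the non-positivity problem flagged just before the corollary. Throughout I would abbreviate the scalar read-out $a := \theta_g^\sigma\begin{bmatrix}z\\1\end{bmatrix}$ and write $p := p(z;\alpha)$, so that the theorem reads $\mu = -a^{-1}\,\theta_g^\mu\begin{bmatrix}z\\1\end{bmatrix}$ and $\sigma^2 = (2pa)^{-1}$.

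For the mean, I would eliminate $a$ between the two theorem expressions. The variance identity gives $a = (2p\sigma^2)^{-1}$, and substituting this into $\mu = -a^{-1}\,\theta_g^\mu\begin{bmatrix}z\\1\end{bmatrix}$ yields $\mu = -2p\sigma^2\,\theta_g^\mu\begin{bmatrix}z\\1\end{bmatrix} = \sigma^2\bigl(-2p\,\theta_g^\mu\begin{bmatrix}z\\1\end{bmatrix}\bigr)$, which is exactly the claimed form. The only requirement is $a\neq 0$ (finite, nonzero variance). Since the resulting expression is nothing but the mean--variance link of the natural-parameter Gaussian, $\mu = \eta_1\sigma^2$ with $\eta_1 = -2p\,\theta_g^\mu\begin{bmatrix}z\\1\end{bmatrix}$, it stays valid once the variance is reparametrized in the next step.

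For the variance the content is not a term-by-term equality but the removal of a pathology: $\sigma^2 = (2pa)^{-1}$ is undefined or negative whenever the linear read-out $a$ is non-positive. I would therefore reparametrize the scalar output so that it serves as a log-precision, i.e. replace the raw value $a$ by $\exp(a)$ and overload $\theta_g^\sigma$ to denote these new, unconstrained weights. Then $\sigma^2 = (2p\exp(a))^{-1}$, and applying the log-sum-exp identity $(2p\exp(a))^{-1} = \exp\!\bigl(-[\log 2 + \log p + a]\bigr) = \Bigl(\exp\!\bigl(-\tfrac{1}{2}[\log 2 + \log p + a]\bigr)\Bigr)^2$ reproduces the stated expression. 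Writing it as a square makes manifest that $\sigma^2 > 0$ for every network output, which is precisely what MLE training requires.

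The main obstacle, and the step most likely to be misread, is this variance rewriting: one must state explicitly that the two variance formulas are linked by a reparametrization (sending the read-out onto the log-scale) rather than by an algebraic identity, and that the symbol $\theta_g^\sigma$ is reused for the reparametrized weights. Once that convention is fixed, both claims follow from elementary manipulations together with the properties of $\exp$ and $\log$, with no analytic difficulty.
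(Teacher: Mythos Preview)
Your proposal is correct and matches the paper's proof essentially step for step: the mean is obtained by multiplying and dividing by $2p(z;\alpha)$ to expose the $\sigma^2$ factor, and the variance is obtained by taking logs, splitting the product, and then reparametrizing so that the linear head $\theta_g^\sigma[z,1]^\top$ directly encodes the log term. Your explicit flagging of the variance step as a reparametrization (with $\theta_g^\sigma$ reused for the new weights) is exactly what the paper does with its parenthetical ``parameterised directly the $\log$ by $\theta_g^\sigma\begin{bmatrix}z\\1\end{bmatrix}$,'' and if anything you articulate this convention more clearly than the original.
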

Corollary~\ref{corol:gauss} gives a defined mean and variance formulation for every $\theta_g^\sigma\begin{bmatrix}z,1\end{bmatrix}^\top \in \mathbb{R}$ such that they can be easily optimized with \acrshort{MLE}.

\subsection{Training and Inference Process}
Given these aforementioned results, we next present the training and inference of our framework:

\textbf{Training.} In the first training step, we optimize the model by using \acrfull{ERM}~\citep{vapnik1998erm} with training data $D_s$, i.e., we do \acrshort{MLE}
\begin{equation}\label{method:1st-optimization}
    \min_{\theta_{g,f}} \left \{ \mathbb{E}_{(x,y) \sim D_s}\left [ \frac{1}{2} \log(2\pi \sigma_o^2(x,\theta)) + \frac{(y - \mu_o(x,\theta))^2}{2\sigma_o^2(x,\theta)} \right ] \right\}
\end{equation}
, where $\mu_o(x,\theta), \sigma_o^2(x,\theta)$ follows Corollary~\ref{corol:gauss} without the density function $p(z;\alpha)$, i.e.,
\begin{align}
    &\mu_o(x,\theta) = \sigma_o^2(z,\theta)\left(-2 \cdot \theta_g^\mu{\begin{bmatrix}z\\1\end{bmatrix}}\right), \label{eq:mu}\\
    &\sigma_o^2(x,\theta) = \exp\left(-\frac{1}{2}\left[\log(2)+\theta_g^\sigma\begin{bmatrix}z\\1\end{bmatrix}\right]\right)^2, \label{eq:sigma}
\end{align}
where $z=f(x)$.

After that, we freeze the parameter $\theta_{f}$ of $f$ to estimate the density on the feature space $\mathcal{Z}$ by positing a statistical model for $p(Z;\alpha)$ with Normalizing-Flows~\citep{dinh2017density} since it is provable (in terms of Lemma~\ref{lemma:density}), simple, and provides exact log-likelihood~\citep{charpentier2022natural}. Specifically, we optimize by using \acrshort{MLE} w.r.t. the logarithm as follows
\begin{align}
    &\max_{\alpha} \{ \mathbb{E}_{z=f(x)\sim D_B} [ \log(p(z;\alpha))\nonumber\\ &:= \log(p(t;\alpha)) + \log \left | \det \left ( \frac{\partial t}{\partial z}\right ) \right | ] \},
\end{align}
where random variable $t=s_{\alpha}(f(x))$ and $s$ is a bijective differentiable function. It is worth noticing that $p(Z;\alpha)$ can be any other density function (e.g., Kernel density estimation, Gaussian mixture models, etc.).

Finally, we combine our model with the likelihood value of the density function $p(Z;\alpha)$ by re-updating the weight of classifier $g$, i.e., $\theta_g$ via optimizing 
\begin{equation}\label{method:2nd-optimization}
    \min_{\theta_{g}} \left \{ \mathbb{E}_{(x,y) \sim D_s} \left [ \frac{1}{2} \log(2\pi \sigma^2(x,\theta)) + \frac{(y - \mu(x,\theta))^2}{2\sigma^2(x,\theta)} \right ] \right \}
\end{equation}
, where $\mu(x,\theta)$ and $\sigma^2(x,\theta)$ follows Corollary~\ref{corol:gauss}.

\textbf{Inference.} After completing the training process, for a new input $x_t$ at the test-time, we perform prediction by combining the density function on feature space $p(z_t;\alpha)$ following Corollary~\ref{corol:gauss}. The pseudo-code for the training and inference processes of our proposed framework is presented in Algorithm~\ref{alg:algorithm} and the demo notebook code is in Apd.~\ref{apd:code}.

\begin{figure}[t!]
\begin{algorithm}[H]
\caption{Training and Inference (code is in Apd.~\ref{apd:code})}
\label{alg:algorithm}
\begin{algorithmic}
    \STATE \textbf{Training Input}: Dataset $D_s$, encoder $f$, density $p(f(X);\alpha)$, regressor $g$ with $\theta_g = (\theta_g^\sigma,\theta_g^\mu)$, learning rate $\eta$, batch-size $B$\;
    \FOR{$e=1\rightarrow \text{epochs}$}
        \STATE Sample $(x,y)\in D_B$ with a mini-batch $B$ for $D_s$\;
        \STATE Set $\mu_o(x,\theta)$ and $\sigma_o^2(x,\theta)$ following Eq.~\ref{eq:mu} and Eq.~\ref{eq:sigma}
        \STATE Update $\theta_{g,f}$ as:
        \STATE \begin{normalsize}\fontsize{9.5pt}{10pt}\selectfont$\theta_{g,f} -\eta \nabla_{\theta_{g,f}}\mathbb{E}_{(x,y)} \left [ \frac{1}{2} \log(2\pi \sigma_o^2(x,\theta)) + \frac{(y - \mu_o(x,\theta))^2}{2\sigma_o^2(x,\theta)} \right ]$\;\end{normalsize}
    \ENDFOR
    \FOR{$e=1\rightarrow \text{train-density epochs}$}
        \STATE Sample $x \in D_B$ with a mini-batch $B$ for $D_s$\;
        \STATE Update $\alpha$ as:
        \STATE $\alpha - \eta \nabla_{\alpha} \mathbb{E}_{z=f(x)} \left[ -\log(p(t;\alpha)) - \log \left | \det \left ( \frac{\partial t}{\partial z}\right ) \right |\right]$
    \ENDFOR
    \FOR{$e=1\rightarrow \text{epochs}$}
        \STATE Sample $(x,y)\in D_B$ with a mini-batch $B$ for $D_s$\;
        \STATE Set $\sigma^2(x,\theta)$ and $\mu(x,\theta)$ following Corollary~\ref{corol:gauss}
        \STATE Update $\theta_{g}$ as:
        \STATE $\theta_{g} -\eta \nabla_{\theta_{g}}\mathbb{E}_{(x,y)} \left [ \frac{1}{2} \log(2\pi \sigma^2(x,\theta)) + \frac{(y - \mu(x,\theta))^2}{2\sigma^2(x,\theta)} \right ]$\;
     \ENDFOR
    \STATE \textbf{Inference Input}: Test sample $x_t$\;
    \STATE Set $\sigma^2(x_t,\theta)$ and $\mu(x_t,\theta)$ following Corollary~\ref{corol:gauss}
    \STATE Predict $\hat{y}_t \sim \mathcal{N}(\mu(x_t,M);\sigma^2(x_t,M))$\;
\end{algorithmic}
\end{algorithm}
\end{figure}

\begin{remark}\label{remark:efficiency} \textbf{(Computational efficiency at test-time)}
    Corollary~\ref{corol:gauss} shows that the complexity of Density-Regression at test-time is only $\mathcal{O}(1)$ by requiring only a single forward pass. Meanwhile, the sampling-based approach is $\mathcal{O}(M)$, where $M$ is the number of sampling times. Compared with Deterministic~Gaussian~\acrshort{DNN}, ours computation only needs to additionally compute $p(z_t;\alpha)$, therefore, only higher than Deterministic~Gaussian~\acrshort{DNN} by the additional parameter $\alpha$ and the latency of $p(z_t;\alpha)$. This number is often very small in practice by Fig.~\ref{fig:model_size}~(a) and Fig.~\ref{fig:model_size}~(b).
\end{remark}

\subsection{Theoretical Analysis}
Intuitively, the predictive distribution of Density-Regression in Corollary~\ref{corol:gauss} leads to reasonable uncertainty estimation for the two limit cases of strong \acrshort{IID} and \acrshort{OOD} data. In particular, for very unlikely \acrshort{OOD} data, i.e., $d(z_{t},Z_s) \rightarrow \infty$, the prediction become uncertain. Conversely, for very likely \acrshort{IID} data, i.e., $d(z_{t},Z_s) \rightarrow 0$, the prediction converges to a deterministic point estimate. We formally show this property below:

Let us recall a Lemma when $p(Z;\alpha)$ is a Normalizing-Flows model~\citep{papamakarios2021normalizing}:
\begin{lemma}\label{lemma:density} (Lemma 5~\citep{charpentier2022natural})
    If $p(Z;\alpha)$ is parametrized with a Gaussian Mixture Model (GMM) or a radial Normalizing-Flows, then $\lim_{d(z_{t},Z_s) \rightarrow \infty} p(z_t;\alpha) \rightarrow 0$.
\end{lemma}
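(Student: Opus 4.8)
The plan is to handle the two parametrizations separately, exploiting in both cases that the density is built from a base Gaussian whose tails decay super-polynomially. The common first step is a triangle-inequality reduction: I want to convert the hypothesis $d(z_t,Z_s)\to\infty$ into a statement that $z_t$ escapes every fixed reference point of the fitted model. Concretely, for any fixed point $c\in\mathcal{Z}$, $\|z_t-Z_s\|_{\mathcal{Z}}\le \|z_t-c\|_{\mathcal{Z}}+\|c-Z_s\|_{\mathcal{Z}}$, and taking expectation over $Z_s$ gives $\|z_t-c\|_{\mathcal{Z}}\ge d(z_t,Z_s)-\mathbb{E}\|c-Z_s\|_{\mathcal{Z}}$. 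Since the model is fit to $Z_s$ (which has finite first moment), $\mathbb{E}\|c-Z_s\|_{\mathcal{Z}}$ is a finite constant, so $d(z_t,Z_s)\to\infty$ forces $\|z_t-c\|_{\mathcal{Z}}\to\infty$.

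For the GMM case I would write $p(z_t;\alpha)=\sum_{k=1}^K \pi_k\,\mathcal{N}(z_t;\mu_k,\Sigma_k)$ with $\sum_k\pi_k=1$ and each $\Sigma_k$ positive definite. Applying the reduction above with $c=\mu_k$ shows $\|z_t-\mu_k\|_{\mathcal{Z}}\to\infty$ for every component. Each Gaussian factor is then bounded by $\exp\left(-\tfrac12(z_t-\mu_k)^\top\Sigma_k^{-1}(z_t-\mu_k)\right)\le \exp\left(-c_k\|z_t-\mu_k\|_{\mathcal{Z}}^2\right)$ with $c_k=\tfrac12\lambda_{\min}(\Sigma_k^{-1})>0$, so each term tends to zero; a finite convex combination of vanishing terms vanishes, giving $\lim_{d(z_t,Z_s)\to\infty}p(z_t;\alpha)=0$.

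For the radial normalizing flow I would use the change-of-variables identity $p(z_t;\alpha)=p_0\!\left(s_\alpha(z_t)\right)\left|\det\left(\partial s_\alpha/\partial z_t\right)\right|$, where $p_0$ is the standard-normal base density and $s_\alpha$ is the radial bijection. The structural fact I would exploit is that the radial perturbation in $s_\alpha$ is bounded and acts only near its center, so $\|s_\alpha(z_t)\|\to\infty$ at the same rate as $\|z_t\|\to\infty$ while the Jacobian determinant grows at most polynomially. Because $p_0\!\left(s_\alpha(z_t)\right)$ decays like $\exp\left(-\tfrac12\|s_\alpha(z_t)\|^2\right)$, the Gaussian factor dominates any polynomial Jacobian growth, and the product vanishes; combining with the reduction (taking $c$ to be the flow center) closes this case.

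I expect the radial-flow case to be the main obstacle, specifically the bound on $\left|\det(\partial s_\alpha/\partial z_t)\right|$ showing that its growth is strictly slower than the exponential decay of the base Gaussian. Making this rigorous requires the explicit radial parametrization and a careful estimate of the radial and tangential eigenvalues of $\partial s_\alpha/\partial z_t$ as the radius tends to infinity. The GMM case, by contrast, is essentially immediate once the reduction to $\|z_t-\mu_k\|_{\mathcal{Z}}\to\infty$ is in place.
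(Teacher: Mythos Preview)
The paper does not supply its own proof of this lemma: it is quoted verbatim as Lemma~5 of \citet{charpentier2022natural} and used as a black box in the proofs of Theorems~\ref{theorem:uncertainty_iid_ood} and~\ref{theorem:distance-aware}. So there is nothing to compare your proposal against in this paper; you are providing an argument where the authors simply cite one.

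That said, your plan is sound. The triangle-inequality reduction from $d(z_t,Z_s)\to\infty$ to $\|z_t-c\|_{\mathcal Z}\to\infty$ for any fixed $c$ is exactly the right bridge, and the GMM case is then immediate. For the radial flow, the obstacle you flag is actually milder than you expect: for the standard radial transform $s_\alpha(z)=z+\beta\,(z-z_0)/(\alpha+\|z-z_0\|)$ the additive perturbation has norm at most $|\beta|$, so $\bigl|\,\|s_\alpha(z)\|-\|z\|\,\bigr|$ is uniformly bounded, and the Jacobian determinant tends to $1$ (not merely grows polynomially) as $\|z-z_0\|\to\infty$. Hence the Gaussian base dominates trivially. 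The only bookkeeping point worth tightening is the direction of the flow: your change-of-variables formula assumes $s_\alpha$ maps the data space to the base; if the radial map is specified generatively (base $\to$ data) you need its inverse, and the invertibility constraint on $\beta$ (e.g., $\beta\ge-\alpha$) then enters. This is a notational rather than conceptual issue.
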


Based on Lemma~\ref{lemma:density}, we obtain the following results:
\begin{theorem}\label{theorem:uncertainty_iid_ood} 
Density-Regression converges to a deterministic point estimate, i.e., $\sigma^2(x_{iid};\theta) \rightarrow 0$ when $d(z_{iid},Z_s)\rightarrow 0$ by $p(z_{iid};\alpha)\rightarrow \infty$, and become uncertain, i.e., $\sigma^2(x_{ood};\theta) \rightarrow \infty$ when $d(z_{ood},Z_s)\rightarrow \infty$ by $p(z_{ood};\alpha)\rightarrow 0$. The proof is in Apd.~\ref{proof:theorem:uncertainty_iid_ood}.
\end{theorem}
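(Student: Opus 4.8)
The plan is to reduce both limit claims to a single closed-form expression for the predictive variance and then read off the two limits directly. First I would simplify the variance from Corollary~\ref{corol:gauss}. Writing $a := \theta_g^\sigma\begin{bmatrix}z\\1\end{bmatrix}$ for the scalar output of the (log-)variance head and $p := p(z;\alpha)$ for the feature density, the squared exponential collapses as
\begin{align*}
    \sigma^2(x,\theta) = \exp\Bigl(-\bigl[\log 2 + \log p + a\bigr]\Bigr) = \frac{e^{-a}}{2\,p(z;\alpha)}.
\end{align*}
This identity makes the mechanism transparent: the predictive variance is inversely proportional to the feature density, modulated by $e^{-a}$. A parallel simplification of the mean gives $\mu(x,\theta) = -\,\theta_g^\mu\begin{bmatrix}z\\1\end{bmatrix}\,e^{-a}$, which carries no dependence on $p$, so the density steers only the uncertainty summary statistic $\sigma^2$, exactly as Def.~\ref{def:distanceaware} requires.

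For the \acrshort{IID} case, the hypothesis $d(z_{iid},Z_s)\to 0$ is paired with $p(z_{iid};\alpha)\to\infty$. Provided $a$ stays bounded (so that $e^{-a}$ is bounded), the factor $1/p(z_{iid};\alpha)\to 0$ forces $\sigma^2(x_{iid};\theta)\to 0$, i.e., the forecast collapses to a deterministic point estimate. For the \acrshort{OOD} case I would invoke Lemma~\ref{lemma:density}: since $p(Z;\alpha)$ is parametrized by a radial Normalizing-Flows model (or a GMM), $d(z_{ood},Z_s)\to\infty$ gives $p(z_{ood};\alpha)\to 0$, hence $1/p(z_{ood};\alpha)\to\infty$ and $\sigma^2(x_{ood};\theta)\to\infty$, recovering maximal uncertainty far from the training support.

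The hard part is controlling the numerator $e^{-a}$ in the \acrshort{OOD} limit: as $z_{ood}$ escapes to infinity the linear head $a=\theta_g^\sigma\begin{bmatrix}z\\1\end{bmatrix}$ need not stay bounded, and if $a\to+\infty$ then $e^{-a}\to 0$ could in principle compete with $1/p\to\infty$. I would settle this by comparing growth rates inside the exponent: for a Gaussian-type density the log-density decays quadratically in $\left\| z \right\|_{\mathcal{Z}}$ (like $-c\left\| z \right\|_{\mathcal{Z}}^2$), whereas $a$ is only linear in $z$, so $-\log p$ dominates $a$ and the exponent $-[\log 2+\log p+a]\to+\infty$ regardless of the sign of $a$. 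Thus $\sigma^2(x_{ood};\theta)\to\infty$ is driven by the density decay even when the variance head is unbounded. (Alternatively, under the common distance-aware assumption of a bounded feature map $f$, the term $a$ is trivially bounded and the conclusion is immediate.) Assembling the two cases completes the proof.
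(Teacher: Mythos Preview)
Your proposal is correct and follows essentially the same strategy as the paper: both read off the two limits directly from the closed-form variance, invoking Lemma~\ref{lemma:density} for the \acrshort{OOD} direction. The only differences are that the paper works from the Theorem~\ref{theo:gauss} form of $\sigma^2$ and tacitly treats the linear head as fixed when sending $p\to 0$ or $p\to\infty$, whereas you start from Corollary~\ref{corol:gauss}, simplify to $e^{-a}/(2p)$, and add a growth-rate comparison (quadratic decay of $\log p$ versus linear growth of $a$) to control the numerator in the far-\acrshort{OOD} regime---an extra bit of rigor the paper's own proof does not supply.
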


\begin{theorem}\label{theorem:distance-aware} The predictive distribution of Density-Regression $p(y|x;\theta) \propto  \exp(-p(f(x);\alpha) \theta_g^\top \Phi(f(x),y))$ is distance-aware on feature space $\mathcal{Z}$ by satisfying the condition in Def.~\ref{def:distanceaware}, i.e., there exists a summary statistic $u(z_{t})$ of $p(\hat{y}_t|x_t;\theta)$ on the new test feature $z_t=f(x_t)$ s.t. $u(z_{t}) = v(d(z_{t},Z_s))$, where $v$ is a monotonic function and $d(z_{t},Z_s) = \mathbb{E}\left \| z_{t} - Z_s \right \|_{\mathcal{Z}}$ is the distance between $z_{t}$ and the training features random variable $Z_s$.
The proof is in Apd.~\ref{proof:theorem:distance-aware}.
\end{theorem}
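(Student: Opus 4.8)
The plan is to take the predictive variance as the summary statistic and track how it inherits its distance dependence from the density term. Concretely, I would set $u(z_t) := \sigma^2(x_t,\theta)$, the canonical model-uncertainty measure for a Gaussian forecast; since the differential entropy $\tfrac12\log(2\pi e\,\sigma^2)$ is a strictly increasing function of $\sigma^2$, any monotonicity conclusion for the variance transfers verbatim to the entropy, so the choice of statistic is harmless. It then suffices to exhibit a monotonic $v$ with $u(z_t)=v(d(z_t,Z_s))$, matching Def.~\ref{def:distanceaware}.

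First I would rewrite the variance from Corollary~\ref{corol:gauss} in a form that isolates the density. Squaring the exponential and collecting terms gives
\begin{equation*}
\sigma^2(x_t,\theta)=\exp\!\Big(-\big[\log 2+\log p(z_t;\alpha)+\theta_g^\sigma[z_t;1]\big]\Big)=\frac{1}{2\,p(z_t;\alpha)\,\exp\!\big(\theta_g^\sigma[z_t;1]\big)},
\end{equation*}
with $z_t=f(x_t)$. Holding the learned weights $\theta_g^\sigma$ fixed, this displays $\sigma^2$ as a strictly decreasing function of the density value: writing $\sigma^2=g(p(z_t;\alpha))$ with $g(p)=c/p$ for a positive constant $c$, we have $g'<0$ on $(0,\infty)$.

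Second I would convert the density value into a function of distance. By Lemma~\ref{lemma:density}, when $p(Z;\alpha)$ is a radial Normalizing-Flow or a GMM, its value depends on $z_t$ essentially through the displacement from the training support and decays monotonically to $0$ as $d(z_t,Z_s)\to\infty$; I would state this as $p(z_t;\alpha)=w(d(z_t,Z_s))$ for a strictly decreasing $w$ (the radial construction makes the density an explicit monotone-decreasing function of the radius, and Lemma~\ref{lemma:density} pins down its limit). Composing the two steps, $u(z_t)=g\!\left(w(d(z_t,Z_s))\right)=:v(d(z_t,Z_s))$, and since $g$ and $w$ are both decreasing their composition $v$ is increasing, hence monotonic, which is exactly the requirement of Def.~\ref{def:distanceaware}. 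The limiting checks $\sigma^2\to\infty$ as $d\to\infty$ and $\sigma^2\to0$ as $d\to0$ recover Theorem~\ref{theorem:uncertainty_iid_ood} as a consistency sanity check.

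The main obstacle is the residual dependence of $\sigma^2$ on $z_t$ through the linear term $\theta_g^\sigma[z_t;1]$, which is not \emph{a priori} a function of $d(z_t,Z_s)$ alone and could in principle break the clean factorization $u=v\circ d$. I would address this by noting that the density factor is the component that vanishes or diverges in the two limits and therefore governs the monotone trend, whereas the linear term contributes only the bounded multiplicative factor $\exp(\theta_g^\sigma[z_t;1])$. Absorbing this factor (for instance, under the radial assumption that $d$ is measured in the same feature geometry the flow is built on, so that along the radial direction the offset is constant) yields the exact monotone relationship; more generally the density dominates, so $v$ remains monotone in the distribution-shift regime that matters for uncertainty.
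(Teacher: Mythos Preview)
Your proposal takes essentially the same approach as the paper: the paper also chooses an uncertainty statistic (entropy rather than your variance, but related by the strictly increasing map $\tfrac12\log(2\pi e\,\sigma^2)$), shows it is monotone decreasing in $p(z_t;\alpha)$ by differentiating, and then separately argues that $p(z_t;\alpha)$ is monotone decreasing in $d(z_t,Z_s)$ via an explicit one-dimensional Gaussian computation rather than your appeal to the radial structure and Lemma~\ref{lemma:density}. The paper likewise differentiates while holding $b=\theta_g^\sigma[z_t;1]$ fixed, so the obstacle you flag is present in the original proof as well and is handled with the same implicit assumption you make.
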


\begin{remark}
    Theorem~\ref{theorem:distance-aware} shows our Density-Regression is distance-aware on the feature representation $\mathcal{Z}$, i.e., its predictive probability reflects monotonically the distance between the test feature and the training set. This is a necessary condition for a \acrshort{DNN} to achieve high-quality uncertainty estimation~\citep{Liu2020SNGP,bui2023densitysoftmax}. Combining with Theorem~\ref{theorem:uncertainty_iid_ood}, this proves when the likelihood of $p(Z;\alpha)$ is high, our model is certain on \acrshort{IID} data, and when the likelihood of $p(Z;\alpha)$ decreases on \acrshort{OOD} data, the certainty will decrease correspondingly.
\end{remark}
\section{Experiments}\label{sec:experiments}
\begin{table*}[ht!]
    \caption{Time series temperature weather forecasting, models are trained on data from 2009 to 2016 in the Beutenberg weather station, tested on this \acrshort{IID} data, and \acrshort{OOD} data from Saaleaue station in 2022. \acrfull{NLL}, \acrfull{RMSE}, Calibration (Cal), and Sharp (Sharpness) mean evaluation on \acrshort{IID} test-set. oNLL, oRMSE, oCal, and oSharp correspond these metrics with evaluation on \acrshort{OOD} test-set. Lower is better. Results are reported over 10 different random seeds. Best scores that have the $p\text{-value} \leq0.05$ in the significant T-test are marked in \textbf{bold}.}
    \centering
    \scalebox{0.77}{
    \begin{tabular}{lcccccccc}
    \toprule
    \textbf{Method} & \textbf{NLL ($\downarrow$)} & \textbf{RMSE ($\downarrow$)} & \textbf{Cal ($\downarrow$)} & \textbf{Sharp ($\downarrow$)} & \textbf{oNLL ($\downarrow$)} & \textbf{oRMSE ($\downarrow$)} & \textbf{oCal ($\downarrow$)} & \textbf{oSharp ($\downarrow$)}\\
    \midrule
    Deterministic & -3.90 $\pm$ 0.10 & 0.09 $\pm$ 0.01 & 0.47 $\pm$ 0.29 & 0.10 $\pm$ 0.01 & -1.76 $\pm$ 0.46 & 0.15 $\pm$ 0.01 & 0.58 $\pm$ 0.35 & 0.09 $\pm$ 0.01\\
    Quantile  & -3.67 $\pm$ 0.10 & 0.10 $\pm$ 0.01 & 0.88 $\pm$ 0.58 & 0.08 $\pm$ 0.01 & -0.11 $\pm$ 0.85 & 0.15 $\pm$ 0.00 & 1.36 $\pm$ 0.77 & 0.08 $\pm$ 0.01\\
    MC~Dropout & -3.48 $\pm$ 0.11 & 0.10 $\pm$ 0.00 & 0.81 $\pm$ 0.26 & 0.16 $\pm$ 0.01 & \textbf{-2.52 $\pm$ 0.06} & 0.16 $\pm$ 0.00 & 0.32 $\pm$ 0.24 & 0.16 $\pm$ 0.01\\
    MFVI-BNN & -3.77 $\pm$ 0.35 & 0.10 $\pm$ 0.02 & 0.73 $\pm$ 0.51 & 0.12 $\pm$ 0.03 & -2.25 $\pm$ 0.26 & 0.15 $\pm$ 0.01 & 0.54 $\pm$ 0.56 & 0.12 $\pm$ 0.03\\
    EDL & -3.96 $\pm$ 0.09  & 0.09 $\pm$ 0.00 & 0.55 $\pm$ 0.31 & 0.10 $\pm$ 0.01 & -2.19 $\pm$ 0.22 & \textbf{0.14 $\pm$ 0.00} & 0.68 $\pm$ 0.37 & 0.09 $\pm$ 0.01\\
    SNGP & -3.35 $\pm$ 0.19 & 0.13 $\pm$ 0.01 & 0.52 $\pm$ 0.47 & 0.15 $\pm$ 0.01 & -2.20 $\pm$ 0.09 & 0.27 $\pm$ 0.02 & 0.29 $\pm$ 0.25 & 0.23 $\pm$ 0.05\\
    DUE & -3.37 $\pm$ 0.22 & 0.13 $\pm$ 0.02 & 0.64 $\pm$ 0.67 & 0.14 $\pm$ 0.01 & -2.38 $\pm$ 0.13 & 0.27 $\pm$ 0.03 & 0.27 $\pm$ 0.20 & 0.20 $\pm$ 0.03\\
    Ensembles & \textbf{-4.04 $\pm$ 0.06} & \textbf{0.08 $\pm$ 0.00} & 0.62 $\pm$ 0.20 & 0.11 $\pm$ 0.01 & -2.44 $\pm$ 0.15 & \textbf{0.14 $\pm$ 0.00} & 0.25 $\pm$ 0.13 & 0.10 $\pm$ 0.01\\
    \textbf{Ours} & -3.99 $\pm$ 0.08 & 0.08 $\pm$ 0.01 & \textbf{0.38 $\pm$ 0.14} & \textbf{0.09 $\pm$ 0.00} & -2.16 $\pm$ 0.16 & \textbf{0.14 $\pm$ 0.00} & \textbf{0.23 $\pm$ 0.10} & \textbf{0.09 $\pm$ 0.00}\\
    \bottomrule
    \end{tabular}}
    \label{tab:time-series}
\end{table*}

\begin{table*}[ht!]
    \caption{UCI: Wine Quality, models are trained on red wine sample, tested on this \acrshort{IID} data, and \acrshort{OOD} white vinho verde wine samples, from the north of Portugal.}
    \centering
    \scalebox{0.78}{
    \begin{tabular}{lcccccccc}
    \toprule
    \textbf{Method} & \textbf{NLL ($\downarrow$)} & \textbf{RMSE ($\downarrow$)} & \textbf{Cal ($\downarrow$)} & \textbf{Sharp ($\downarrow$)} & \textbf{oNLL ($\downarrow$)} & \textbf{oRMSE ($\downarrow$)} & \textbf{oCal ($\downarrow$)} & \textbf{oSharp ($\downarrow$)}\\
    \midrule
    Deterministic & 1.10 $\pm$ 0.01 & 0.62 $\pm$ 0.01 & 0.69 $\pm$ 0.05 & 1.14 $\pm$ 0.02 & 1.23 $\pm$ 0.04 & 0.82 $\pm$ 0.03 & 0.49 $\pm$ 0.38 & 1.04 $\pm$ 0.03\\
    Quantile  & 1.31 $\pm$ 0.01 & 0.66 $\pm$ 0.03 & 1.30 $\pm$ 0.08 & 1.64 $\pm$ 0.03 & 1.53 $\pm$ 0.05 & 0.88 $\pm$ 0.08 & 1.59 $\pm$ 0.62 & 1.99 $\pm$ 0.13\\
    MC~Dropout & 1.14 $\pm$ 0.03 & 0.63 $\pm$ 0.01 & 0.73 $\pm$ 0.08 & 1.20 $\pm$ 0.05 & 1.27 $\pm$ 0.04 & 0.85 $\pm$ 0.04 & 0.55 $\pm$ 0.39 & 1.12 $\pm$ 0.03\\
    MFVI-BNN & 1.36 $\pm$ 0.02 & 0.79 $\pm$ 0.01 & 1.25 $\pm$ 0.08 & 1.58 $\pm$ 0.05 & 1.40 $\pm$ 0.02 & 0.91 $\pm$ 0.01 & 1.37 $\pm$ 0.10 & 1.53 $\pm$ 0.05\\
    EDL & \textbf{0.98 $\pm$ 0.03} & 0.61 $\pm$ 0.01 & 0.97 $\pm$ 0.17 & 1.37 $\pm$ 0.16 & 1.55 $\pm$ 0.18 & 0.86 $\pm$ 0.06 & 1.75 $\pm$ 1.30 & 2.87 $\pm$ 4.25\\
    SNGP & 1.18 $\pm$ 0.03 & 0.65 $\pm$ 0.01 & 0.81 $\pm$ 0.07 & 1.36 $\pm$ 0.05 & 1.42 $\pm$ 0.03 & 0.87 $\pm$ 0.02 & 1.09 $\pm$ 0.16 & 1.66 $\pm$ 0.08\\
    DUE & 1.14 $\pm$ 0.03 & 0.64 $\pm$ 0.01 & 0.77 $\pm$ 0.09 & 1.26 $\pm$ 0.05 & 1.27 $\pm$ 0.03 & 0.82 $\pm$ 0.03 & 0.75 $\pm$ 0.42 & 1.27 $\pm$ 0.05\\
    Ensembles & 1.06 $\pm$ 0.01 & \textbf{0.60 $\pm$ 0.00} & \textbf{0.68 $\pm$ 0.04} & \textbf{1.15 $\pm$ 0.01} & \textbf{1.21 $\pm$ 0.02} & \textbf{0.80 $\pm$ 0.02} & 0.44 $\pm$ 0.22 & 1.09 $\pm$ 0.02\\
    \textbf{Ours} & 1.06 $\pm$ 0.01 & \textbf{0.60 $\pm$ 0.01} & \textbf{0.68 $\pm$ 0.02} & \textbf{1.16 $\pm$ 0.00} & \textbf{1.21 $\pm$ 0.02} & 0.81 $\pm$ 0.03 & \textbf{0.43 $\pm$ 0.20} & \textbf{1.07 $\pm$ 0.01}\\
    \bottomrule
    \end{tabular}}
    \label{tab:uci}
\end{table*}

\begin{table*}[ht!]
    \caption{Monocular depth estimation, models are trained on NYU~Depth~v2 dataset, tested on this \acrshort{IID} data, and two different \acrshort{OOD} test-set. cNLL, cRMSE, cCal, and cSharp mean evaluation on the corrupted \acrshort{OOD} data dataset. oNll, oRMSE, oCal, and oSharp mean evaluation on ApolloScape \acrshort{OOD} data.}
    \centering
    \scalebox{0.618}{
    \begin{tabular}{lcccccccccccc}
    \toprule
    \textbf{Method} & \textbf{NLL ($\downarrow$)} & \textbf{RMSE ($\downarrow$)} & \textbf{Cal ($\downarrow$)} & \textbf{Sharp ($\downarrow$)} & \textbf{cNLL ($\downarrow$)} & \textbf{cRMSE ($\downarrow$)} & \textbf{cCal ($\downarrow$)} & \textbf{cSharp ($\downarrow$)} & \textbf{oNLL ($\downarrow$)} & \textbf{oRMSE ($\downarrow$)} & \textbf{oCal ($\downarrow$)} & \textbf{oSharp ($\downarrow$)}\\
    \midrule
    Deterministic & -2.4599  & 0.0381  & 0.0955 & 0.0348  & -1.9875 & 0.0624 & 0.8831 & 0.0360 & 11.8430 & 0.3579 & 27.3004 & 0.0859\\
    MC~Dropout & -2.1086  & 0.0484  & 0.2632  & 0.0530 & -1.1290 & 0.0904 & 0.7900 & 0.0650 & 11.0895 & 0.3625 & 27.4328 & 0.1050\\
    EDL & -2.3165 & 0.0373 & 0.1903 & 0.0525 & 13.3139 & 0.1339 & 2.7539 & 0.1238 & 30.3392 & 0.5392 & 27.9011 & 0.1352\\
    NatPN & -2.1854 & 0.0384 & 0.0953 & 0.0397 & -1.1154 & 0.1021 & 0.7801 & 0.0511 & 17.1042 & 0.4012 & 26.5438 & 0.1356\\
    Ensembles & \textbf{-2.6581}  & \textbf{0.0304}  & 0.0892 & 0.0417 & \textbf{-2.3884} & \textbf{0.0593} & 0.5996 & 0.0425 & 4.5459 & \textbf{0.3440} & 22.6413 & 0.1323\\
    \textbf{Ours} & -2.1203 & 0.0362  & \textbf{0.0810} & \textbf{0.0377} & -2.2337 & 0.0601 & \textbf{0.5431} & \textbf{0.0445} & \textbf{4.4738}  & 0.3596 & \textbf{20.7369}  & \textbf{0.1359} \\
    \bottomrule
    \end{tabular}}
    \label{tab:depth-estimation}
\end{table*}

\subsection{Toy Dataset}
We first qualitatively compare the performance of our approach against a set of baselines on a one-dimensional cubic regression dataset. The detailed baselines, implementation, and source code are in Apd.~\ref{apd:baselines} and Apd.~\ref{apd:implementation}. Following~\citet{lobato2015probabilistic}, we train models on $x$ within $\pm4$ and test within $\pm7$. We compare uncertainty estimation for baseline methods. From Fig.~\ref{fig:demo}, we observe that, as expected, all methods accurately capture uncertainty within the \acrshort{IID} training distribution. Regarding \acrshort{OOD} test set, our proposed Density-Regression estimates uncertainty appropriately and the confidence interval grows on \acrshort{OOD} data, without dependence on sampling.

\subsection{Time Series Weather Forecasting}
\begin{figure}[ht!]
\begin{center}
  \includegraphics[width=1.0\linewidth]{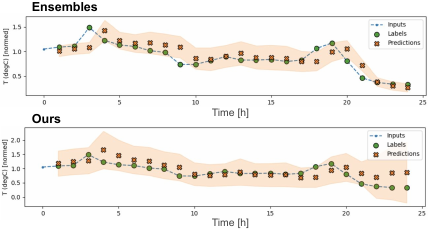}
\end{center}
\caption{Comparison between Deep Ensembles and our model regarding temperature in Celsius (normalized) for every hour on the same day. More details are in Apd.~\ref{apd:time_series}.} 
\label{fig:time-series}
\end{figure}

In this setting, we evaluate models in the real world with a weather time series dataset recorded by the Max Planck Institute for Biogeochemistry~\citep{tensorflow2015-whitepaper}. Specifically, the model will learn to predict the temperature for every hour from 14 different features (collected every 10 minutes), e.g., air temperature, atmospheric pressure, humidity, etc.

Tab.~\ref{tab:time-series} quantitatively shows our method outperforms others in terms of calibration and sharpness in both \acrshort{IID} and \acrshort{OOD} settings. Notably, its predictive uncertainty is more calibrated and sharper than the well-known \acrshort{SOTA} Ensembles. To take a closer look at this difference, we visualize the temperature prediction on \acrshort{OOD} data in Fig.~\ref{fig:time-series}. We observe that Deep Ensembles is conservative, even when making incorrect predictions. In contrast, our model enables signaling when it is likely wrong. Importantly, it is more calibrated by always covering the true label in our confidence intervals, while at the same time, preserving the sharpness with correct predictions.

\subsection{Benchmark UCI}
We next conduct another benchmarking experiment on the UCI datasets~\citep{amini2020deep}. For testing on \acrshort{IID} data, we leave tables in Apd.~\ref{apd:uci}, where our main observation is our Density-Regresiion still performs well in these \acrshort{IID}-only dataset by having a low calibration and sharpness value. We mainly discuss the main result under distribution shifts on UCI:~Wine Quality, which includes two datasets, related to red and white vinho verde wine samples, from the north of Portugal. The goal is to model wine quality based on physicochemical tests~\citep{Cortez2009ModelingWP}. In our setting, we train models on red wine samples, and test on both the \acrshort{IID} samples and the white vinho verde wine \acrshort{OOD} samples. Tab.~\ref{tab:uci} shows our Density-Regression outperforms other baselines and has a competitive result with the \acrshort{SOTA} Deep Enselbmes in all criteria. Notably, the competitive result is not only in uncertainty quality but also in \acrshort{NLL} and \acrshort{RMSE}, showing our method is robust under distribution shifts.

\subsection{Monocular Depth Estimation}
\begin{figure*}[t!]
    \centering
    \begin{tabular}{ccc}
         \includegraphics[width=0.33\textwidth]{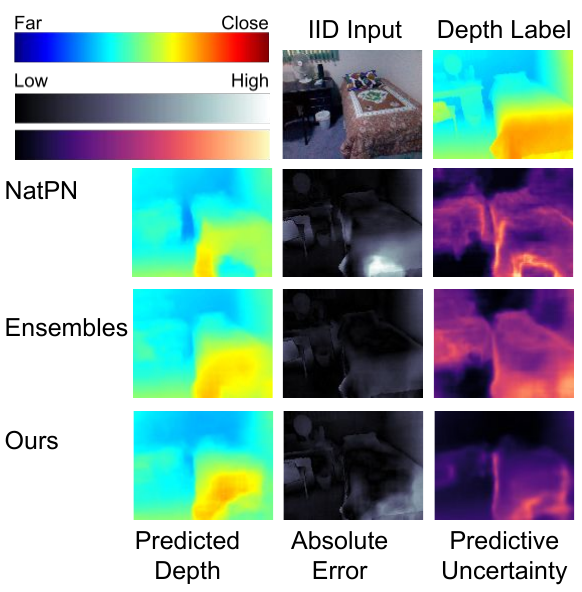}&
         \includegraphics[height=5.65cm,width=0.33\textwidth,keepaspectratio]{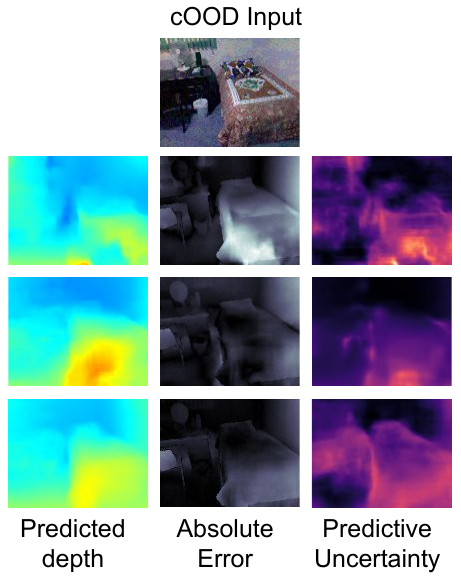}&
         \includegraphics[height=5.29cm,width=0.33\textwidth,keepaspectratio]{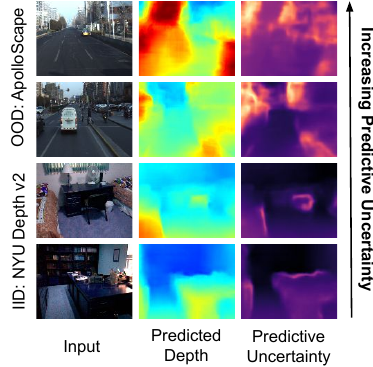}\\
         (a) & (b) & (c)
    \end{tabular}
    \caption{Comparison in pixel-wise depth predictions and predictive uncertainty on (a) \acrshort{IID} and (b) $0.04$ noise level on corrupted \acrshort{OOD} dataset. Detailed figures for the robustness under corrupted noise are in Apd.~\ref{apd:depth}; (c) Our model performance on the real-world \acrshort{OOD} ApolloScape.}
    \label{fig:depth_result}
\end{figure*}

Finally, we present an application of our model in the monocular depth estimation task~\citep{amini2020deep}. We train models with U-Net~\citep{ronneberger2015UNet} on $27k$ RGB-to-depth image pairs of indoor scenes (e.g., bedrooms, kitchens, etc.) from the NYU~Depth~v2~\citep{Silberman2012Indoor}. Then, we test on this \acrshort{IID} test-set and two \acrshort{OOD} datasets, including its corrupted data by adding noise with the Fast Gradient Sign Method~\citep{ian2015explaining} and a diverse outdoor driving ApolloScape~\citep{huang2018ApolloScape}. Tab.~\ref{tab:depth-estimation} once again confirms Density-Regression consistently provides a good quality uncertainty estimate by always achieving the lowest calibration error across three test sets, especially on the real-world \acrshort{OOD} outdoor scenes dataset. Regarding the sharpness, it is worth noticing that, unlike calibration, sharpness is neither a sufficient nor necessary condition for a good uncertainty estimate. It will depend on the model’s accuracy: if a model has a high \acrshort{RMSE} (e.g., on \acrshort{OOD} outdoor scenes), the sharpness score should be high. Therefore, we marked the bold by comparing the calibration error first, then the sharpness later, i.e., if two models have the same calibration error, we compare their sharpness to select the better one.

To take a closer look at the performance, we visualize Fig.~\ref{fig:depth_result}~(a) and Fig.~\ref{fig:depth_result}~(b) to compare the predicted depth map, its error comparing to the ground truth, and the corresponding predicted uncertainty for every pixel. Firstly, we observe that Density-Regression is more robust (i.e., the robustness performance on the \acrshort{OOD} data) than \acrshort{NatPN}~\citep{charpentier2022natural} by lower pixel values in the absolute error image. Secondly, it provides more certain predictions for correct pixels and less certain for incorrect ones than other methods. Finally, it can provide certain predictions on \acrshort{IID} data, and when the corrupted \acrshort{OOD} images are far from the training set, its certainty decreases correspondingly. This confirms a hypothesis that our model is distance-aware, helping to improve the quality of uncertainty quantification under distribution shifts.

Regarding the real-world \acrshort{OOD} ApolloScape, we also observe from Fig.~\ref{fig:depth_result}~(c) that our model can provide confident prediction on \acrshort{IID} images. And when the \acrshort{OOD} dataset is far from \acrshort{IID}, its predictive uncertainty also increases correspondingly. Additionally, Fig.~\ref{fig:model_size}~(c) also shows that when compared to other methods under distribution shifts, our model is also more well-calibrated by less over-confidence and under-confidence, resulting in the closest to the ideal calibration line.

\subsection{Test-time Efficiency Evaluation}
So far, we have empirically shown our model can provide a high-quality uncertainty estimation and have a competitive result with the \acrshort{SOTA} Deep Ensembles across several tasks. We next show that our method outperforms Deep Ensembles regarding test-time efficiency. Indeed, Fig.~\ref{fig:model_size}~(a) and Fig.~\ref{fig:model_size}~(b) show our model has less than four times the model size and five times faster computation than Ensembles across different GPU architectures. Notably, when compared to the best efficient model, i.e., Deterministic~Gaussian~\acrshort{DNN}, our model is only slightly higher due to $p(Z;\alpha)$ density function (the minus of ours to Deterministic can measure this gap). As a result, our method not only has a better uncertainty quality, but also has a higher test-time efficiency than many sampling-free methods, e.g., Quantile, \acrshort{EDL}, \acrshort{NatPN}, \acrshort{SNGP}, and \acrshort{DUE}.

\begin{figure*}[ht!]
    \centering
    \begin{tabular}{ccc}
    \centering
    \includegraphics[width=0.31\linewidth]{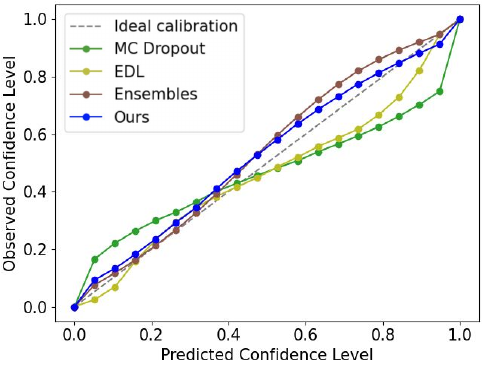}&
    \includegraphics[width=0.31\linewidth]{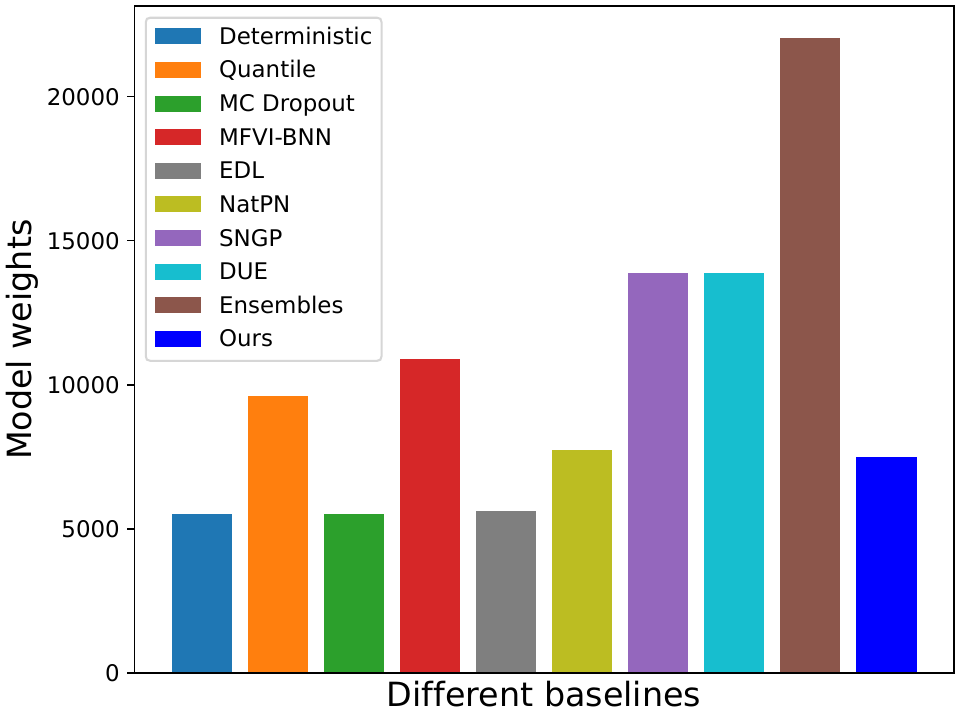}&
    \includegraphics[width=0.31\linewidth]{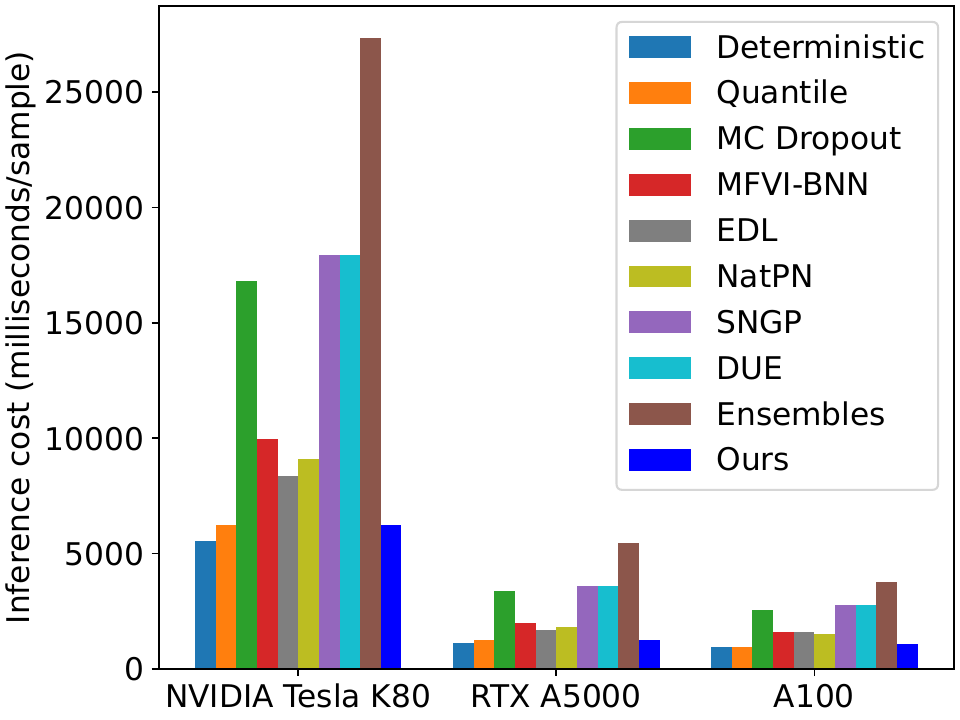}\\
        (a) & (b) & (c)
    \end{tabular}
    \caption{(a) Visualization of calibration error with reliability diagram on the real-world \acrshort{OOD} ApolloScape;
    (b) Comparison in model storage requirement at test-time; (c) Inference cost comparison at test-time across three modern GPU architectures (detailed in Apd.~\ref{apd:implementation}).}
    \label{fig:model_size}
\end{figure*}
\section{Related work}
\textbf{Uncertainty and Robustness.} More discussion of Uncertainty and Robustness can be found in the literature of ~\citet{tran2022plex,bui2022benchmark,ovadia2019can,koh2021wilds,minderer2021revisiting,hendrycks2018benchmarking}. For the regression setting, \citet{gustafsson2023reliable,tran2020methods,dheur2023largescale} have recently empirically covered \acrshort{SOTA} techniques. To summarize, there are three main approaches: sampling-based, sampling-free, and replacing loss function. Regarding \textbf{sampling-based models}, the typical approaches includes \acrshort{GP}~\citep{gardner2018GPyTorch,lee2018deepGP}, \acrshort{BNN}~\citep{blundell2015weight, wen2018flipout}, Dropout~\citep{gal2016mcdropout,gal2016dropouttheory,gal2017concretedropout}, and the \acrshort{SOTA} Ensembles~\citep{hansen1990ensembles}. These sampling-based models, however, are struggling with scalability in terms of the number of weights and inference speed~\citep{nado2021uncertainty}. To tackle this challenge, there are some lightweight sampling-based models have been proposed recently, including BatchEnsemble~\citep{Wen2020BatchEnsemble}, Rank-1~BNN~\citep{dusenberry2020rank1}, and Heteroscedastic~\citep{collier2021correlated}. However, these works often only focus on the classification setting.

Therefore, novel \textbf{sampling-free approaches} to improve uncertainty estimation have recently been proposed. Starting with Deterministic~Gaussian~\acrshort{DNN}~\citep{chua2018deepRL}, then with Quantile Regression~\citep{koenker1978quantiles,romano2019conformalized}, deterministic \acrshort{GP}~\citep{Liu2020SNGP, vanamersfoort2022feature}, and Bayesian inference closed-form based~\citep{amini2020deep,charpentier2020postnet,charpentier2022natural}. However, the uncertainty quality of these methods is often still degraded when compared to sampling-based models. For instance, if the prior hyper-parameters are poorly defined, the Bayesian-based model \acrshort{EDL}~\citep{amini2020deep} and Posterior Network~\citep{charpentier2020postnet} will result in a bad performance in practice. Conversely, Density-Regression does not require any prior hyper-parameters in training and test time. Regarding replacing the loss function, there exists the post-hoc calibration~\citep{kuleshov2018accurate,song19distribution,romano2019conformalized,vovk2005Algorithmic,tibshirani2019Conformal,prinster2022jaws,prinster23jawsx} and regularization approaches~\citep{chung2021beyond,mukhoti2020Calibrating,dheur2023largescale}. In contrast, our baselines only do \acrshort{MLE} using the objective function in Eq.~\ref{eq:MLE} without depending on any regularization or re-calibration set.

\textbf{Improving uncertainty quality via density estimation.} The idea of improving \acrshort{DNN} uncertainty via Density Estimation has been significantly studied~\citep{kuleshov2022sharpness,charpentier2022natural,charpentier2020postnet,mukhoti2022deep,kotelevskii2022NUQ}. However, these work either entail post-hoc re-calibration, \acrshort{OOD} samples in training, or pre-defined prior parameters in test time. For instance, \acrshort{NatPN}~\citep{charpentier2022natural,charpentier2020postnet} customizes the last layer of \acrshort{DNN} with sensitive priors, not normalized by a natural exponent function, resulting in a bad accuracy performance in practice~\citep{nado2021uncertainty}. 

To summarize, compared to prior work, our method does not use any post-hoc re-calibration, \acrshort{OOD} samples in training, or pre-defined prior parameters in test time. Closest to our work is Density-Softmax~\citep{bui2023densitysoftmax}, which is also based on density function and distance awareness property to enhance the quality of uncertainty estimation and robustness under distribution shifts. However, it only works for the classification task, and extending to the regression task is non-trivial. 

\section{Conclusion}
Improving uncertainty quantification is a critical problem in trustworthy~\acrshort{AI}. There has been growing interested in using sampling-based methods to ensure \acrshort{AI} systems are reliable. Challenges often arise when deploying such models in real-world domains. In this regard, our Density-Regression significantly improves test-time efficiency while preserving reliability. We provide theoretical guarantees for our framework and prove it is distance-aware on the feature space. With this property, we empirically show our method is fast, lightweight, and provides high-quality uncertainty estimates in plenty of high-stake real-world applications like weather forecasting and depth estimation. Given this, we hope Density-Regression will inspire researchers and developers to make further progress in improving the efficiency and trustworthiness of \acrshort{DNN}. 

\section*{Acknowledgments}
This work is partially supported by the Discovery Award of the Johns Hopkins University. We thank Drew Prinster (Johns Hopkins University) for his helpful revision of our final paper version.
\bibliographystyle{unsrtnat}
\bibliography{refs}
\printglossary[type=\acronymtype,nonumberlist]
\printglossary
\section*{Checklist}
 \begin{enumerate}

 \item For all models and algorithms presented, check if you include:
 \begin{enumerate}
   \item A clear description of the mathematical setting, assumptions, algorithm, and/or model. [Yes]
   \item An analysis of the properties and complexity (time, space, sample size) of any algorithm. [Yes]
   \item (Optional) Anonymized source code, with specification of all dependencies, including external libraries. [Yes]
 \end{enumerate}

 \item For any theoretical claim, check if you include:
 \begin{enumerate}
   \item Statements of the full set of assumptions of all theoretical results. [Not Applicable]
   \item Complete proofs of all theoretical results. [Yes]
   \item Clear explanations of any assumptions. [Not Applicable]     
 \end{enumerate}

 \item For all figures and tables that present empirical results, check if you include:
 \begin{enumerate}
   \item The code, data, and instructions needed to reproduce the main experimental results (either in the supplemental material or as a URL). [Yes]
   \item All the training details (e.g., data splits, hyperparameters, how they were chosen). [Yes]
         \item A clear definition of the specific measure or statistics and error bars (e.g., with respect to the random seed after running experiments multiple times). [Yes]
         \item A description of the computing infrastructure used. (e.g., type of GPUs, internal cluster, or cloud provider). [Yes]
 \end{enumerate}

 \item If you are using existing assets (e.g., code, data, models) or curating/releasing new assets, check if you include:
 \begin{enumerate}
   \item Citations of the creator If your work uses existing assets. [Yes]
   \item The license information of the assets, if applicable. [Not Applicable]
   \item New assets either in the supplemental material or as a URL, if applicable. [Not Applicable]
   \item Information about consent from data providers/curators. [Not Applicable]
   \item Discussion of sensible content if applicable, e.g., personally identifiable information or offensive content. [Not Applicable]
 \end{enumerate}

 \item If you used crowdsourcing or conducted research with human subjects, check if you include:
 \begin{enumerate}
   \item The full text of instructions given to participants and screenshots. [Not Applicable]
   \item Descriptions of potential participant risks, with links to Institutional Review Board (IRB) approvals if applicable. [Not Applicable]
   \item The estimated hourly wage paid to participants and the total amount spent on participant compensation. [Not Applicable]
 \end{enumerate}

 \end{enumerate}

\appendix
\onecolumn
\noindent\rule{\textwidth}{3pt}
\section*{\Large\centering{Density-Regression: Efficient and Distance-Aware Deep Regressor for Uncertainty Estimation under Distribution Shifts\\(Supplementary Material)}}
\noindent\rule{\textwidth}{1pt}
\textbf{Broader impacts.} High-quality uncertainty estimate is an important property in trustworthy \acrshort{AI}, and Density-Regression can significantly improve uncertainty quality and test-time efficiency. This could be particularly beneficial in high-stake applications (e.g., healthcare, finance, policy decision-making, etc.), where the trained model needs to be deployed and inference on low-resource hardware or real-time response software.

\textbf{Limitations}: 
\begin{enumerate}[leftmargin=13pt,topsep=0pt,itemsep=0mm]
    \item \textbf{Density model performance in practice.} The uncertainty quality of Density-Regression depends on the density function. Our results show if the likelihood on test \acrshort{OOD} feature is lower than \acrshort{IID} set, then Density-Regression can reduce the over-confidence of Deterministic~Gaussian~\acrshort{DNN}. That said, it can be a risk that our model might not fully capture the real-world complexity by estimating density function is not always trivial in practice~\citep{nalisnick2019doDGM,bui2023densitysoftmax,charpentier2022natural}. 
    \item \textbf{Training cost.} Despite showing success in test-time efficiency, we raise awareness about the challenge of training Density-Regression. Specifically, Density-Regression requires a longer training time than Deterministic~Gaussian~\acrshort{DNN} due to three separate training steps in Alg.~\ref{alg:algorithm} (e.g., Fig.~\ref{fig:training-time}). 
\end{enumerate}

\textbf{Remediation.} Given the aforementioned limitations, we encourage people who extend our work to: (1) proactively confront the model design and parameters to desired behaviors in real-world use cases; (2) be aware of the training challenge and prepare enough time to pre-train our framework in practice.

\textbf{Future work.} We plan to tackle Density-Regression's limitations, including improving estimation techniques to enhance the quality of the density function and continuing to reduce the number of parameters to deploy this framework in real-world systems.  

\textbf{Reproducibility.} The source code to reproduce our results is in the attached zip file of this supplementary material. We provide all proofs in Apd.~\ref{apd:proof}, experimental settings in Apd.~\ref{apd:settings}, and detailed results in Apd.~\ref{apd:results}.
\section{Proofs}\label{apd:proof}
In this appendix, we provide the proofs for all the results in the main paper.

\subsection{Lemma~\ref{lemma:exp} and the proof}
\begin{lemma}\label{lemma:exp}
    When $\mathcal{Y}$ is the label space and $\Phi(x,y)$ is the sufficient statistic of the joint distribution $\mathbb{P}(x,y)$ associated with $(x,y)\in \mathcal{X}\times \mathcal{Y}$, we have the predictive distribution follows
    \begin{align}
        p(y|x;\theta) = \exp\left(\eta(\theta_g)^\top\Phi(f(x),y)-\log\left(\int_{y'\in \mathcal{Y}}\exp\left(\eta(\theta_g)^\top\Phi(f(x),y')\right)dy'\right)\right),
    \end{align}
    where $\theta = (\theta_g,\theta_f)$ is the parameter vector of the forecast $h=g\circ f$ and $\eta$ is the natural function for the parameter $\theta$.
\end{lemma}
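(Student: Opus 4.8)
The plan is to read the statement as the assertion that, once the joint law $\mathbb{P}(x,y)$ is placed in an exponential family with sufficient statistic $\Phi(f(x),y)$ and natural parameter $\eta(\theta_g)$, the induced conditional $p(y\mid x;\theta)$ is obtained by normalizing over $y$ alone. I would begin by writing the joint density in canonical exponential-family form,
\[
p(x,y;\theta) = \exp\left(\eta(\theta_g)^\top \Phi(f(x),y) - A(\theta)\right),
\]
where $A(\theta)$ is the global log-partition function normalizing over all of $\mathcal{X}\times\mathcal{Y}$. This display is exactly the content of the hypothesis that $\Phi$ is the sufficient statistic of $\mathbb{P}(x,y)$, together with the structural modeling choice (inherited from the forecast $h=g\circ f$) that the input enters only through the feature $z=f(x)$ and the weights only through $\eta(\theta_g)$.

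Next I would form the conditional as the ratio of the joint to the marginal, $p(y\mid x;\theta)=p(x,y;\theta)/p(x;\theta)$ with $p(x;\theta)=\int_{y'\in\mathcal{Y}} p(x,y';\theta)\,dy'$. Substituting the canonical form, the factor $\exp(-A(\theta))$ is constant in $y'$ and therefore appears identically in the numerator and in every term of the integral in the denominator, so it cancels, leaving
\[
p(y\mid x;\theta) = \frac{\exp\left(\eta(\theta_g)^\top \Phi(f(x),y)\right)}{\int_{y'\in\mathcal{Y}}\exp\left(\eta(\theta_g)^\top \Phi(f(x),y')\right)dy'}.
\]
Rewriting the positive normalizer $C := \int_{y'}\exp(\eta(\theta_g)^\top\Phi(f(x),y'))\,dy'$ as $1/C = \exp(-\log C)$ then collapses the ratio into a single exponential and yields precisely the claimed formula.

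The step that needs care is the cancellation, and with it the role of the base measure. The argument is a clean one-line division only if the reference measure carries no $y$-dependence (or is absorbed into $\Phi$), so that the sole $y$-varying factor in the joint is $\exp(\eta^\top\Phi)$; a base-measure term depending on $y$ would survive in both numerator and denominator and would have to be tracked explicitly. I would also flag the implicit regularity requirement that $C$ be finite and strictly positive, which is what makes $\log C$ well defined; this is guaranteed for the Gaussian sufficient statistic $\Phi(z,y)=\begin{bmatrix}zy^2 & y^2 & 2zy & 2y & z & 1\end{bmatrix}$ used later in Theorem~\ref{theo:gauss}, where the integral is a standard Gaussian normalizer. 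I expect this base-measure and integrability bookkeeping to be the only genuine obstacle; the algebra itself is immediate.
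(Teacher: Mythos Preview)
Your proposal is correct and follows essentially the same route as the paper: write the joint in exponential-family form, divide by the marginal over $y$, and observe that the global normalizer $\exp(-A(\theta))$ (and the base measure $h(x)$, which the paper writes explicitly and assumes depends only on $x$) cancels. Your caveats about the base measure and the finiteness of the conditional normalizer are, if anything, more careful than the paper's own argument.
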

\begin{proof}\label{proof:lemma:exp}
    We have a random variable $x$ that belongs to the exponential family with the Probability density function
    \begin{align}
        p(x;\theta)=h(x)\exp(\eta(\theta)^\top\Phi(x)-A(\theta)),
    \end{align}
    where $h(x)$ is the underlying measure, $\theta$ is the parameter vector, $\eta$ is the natural function for parameter $\theta$, $\Phi(x)$ is the sufficient statistic, and the log-normalizer 
    \begin{align}
        A(\theta) = \log \int_\mathcal{X} h(x)\exp\left(\eta(\theta)^\top\Phi(x)\right)dx.
    \end{align}
    For $\mathcal{Y}$ is the label space and $\Phi(x,y)$ is the sufficient statistic of the joint distribution $\mathbb{P}(x,y)$ associated with $(x,y)\in \mathcal{X}\times \mathcal{Y}$, we have the join Probability density function
    \begin{align}
      p(x,y;\theta)=h(x)\exp\left(\eta(\theta)^\top\Phi(x,y)-A(\theta)\right).
    \end{align}
    By Bayes's rule, we obtain
    \begin{align}
        p(y|x;\theta) = \frac{p(x|y;\theta)}{\int_{y'\in \mathcal{Y}}p(x|y';\theta)p(y'|\theta)dy'}=\frac{p(x,y;\theta)}{\int_{y'\in \mathcal{Y}}p(x,y';\theta)dy'}=\frac{h(x)\exp\left(\eta(\theta)^\top\Phi(x,y)-A(\theta)\right)}{\int_{y'\in \mathcal{Y}}h(x)\exp\left(\eta(\theta)^\top\Phi(x,y')-A(\theta)\right)dy'}.
    \end{align}
    Assumed the $h(x)$ base reference measure to be a function only of $x$ so that we can cancel it from the numerator and denominator in the last step above. Therefore, we get
    \begin{align}
        p(y|x;\theta) = \frac{\exp\left(\eta(\theta)^\top\Phi(x,y)\right)}{\int_{y'\in \mathcal{Y}}\exp\left(\eta(\theta)^\top\Phi(x,y')\right)dy'}=\exp\left(\eta(\theta)^\top\Phi(x,y)-\log\left(\int_{y'\in \mathcal{Y}}\exp\left(\eta(\theta)^\top\Phi(x,y')\right)dy'\right)\right).
    \end{align}
    As a result, when $y = g(f(x))$, we obtain
    \begin{align}
        p(y|x;\theta) &=\exp\left(\eta(\theta_g)^\top\Phi(f(x),y)-\log\left(\int_{y'\in \mathcal{Y}}\exp\left(\eta(\theta_g)^\top\Phi(f(x),y')\right)dy'\right)\right)\\
        &= \frac{\exp\left(\eta(\theta_g)^\top\Phi(f(x),y)\right)}{\int_{y'\in \mathcal{Y}}\exp\left(\eta(\theta_g)^\top\Phi(f(x),y')\right)dy'}
    \end{align}
    of Lemma~\ref{lemma:exp}
\end{proof}

\subsection{Proof of Theorem~\ref{theo:gauss}}\label{proof:theo:gauss}
\begin{proof}
    Since the sufficient statistic has the form $\Phi(z,y) = \begin{bmatrix}zy^2 & y^2 & 2zy & 2y & z & 1\end{bmatrix}$, where $z=f(x)$, replace it to the dot product of the natural parameter $\eta(\theta)$ with sufficient statistic $\Phi(z,y)$, we get
    \begin{align}
        \eta(\theta_g)^\top \Phi(z,y) &= \begin{bmatrix}\theta_1&\theta_2&\theta_3&\theta_4&\theta_5&\theta_6\end{bmatrix}\begin{bmatrix}zy^2 & y^2 & 2zy & 2y & z & 1\end{bmatrix}^\top\\
        &=\begin{bmatrix}\theta_1&\theta_2\end{bmatrix}\begin{bmatrix}z\\1\end{bmatrix}y^2 + 2\begin{bmatrix}\theta_3&\theta_4\end{bmatrix}\begin{bmatrix}z\\1\end{bmatrix}y + \begin{bmatrix}\theta_5&\theta_6\end{bmatrix}\begin{bmatrix}z\\1\end{bmatrix}\\       &=\begin{bmatrix}\theta_1&\theta_2\end{bmatrix}\begin{bmatrix}z\\1\end{bmatrix}y^2 + 2\begin{bmatrix}\theta_3&\theta_4\end{bmatrix}\begin{bmatrix}z\\1\end{bmatrix}y + \frac{\left(-\begin{bmatrix}\theta_3&\theta_4\end{bmatrix}\begin{bmatrix}z\\1\end{bmatrix}\right)^2}{\begin{bmatrix}\theta_1&\theta_2\end{bmatrix}\begin{bmatrix}z\\1\end{bmatrix}}\\
        &=\theta_g^\sigma\begin{bmatrix}z\\1\end{bmatrix}y^2 + 2\theta_{(y,x_1)}\begin{bmatrix}z\\1\end{bmatrix}y+\frac{\left(-\theta_g^\mu\begin{bmatrix}z\\1\end{bmatrix}\right)^2}{\theta_g^\sigma\begin{bmatrix}z\\1\end{bmatrix}}.
    \end{align}
    Hence, we get
    \begin{align}\label{eq:proof:theo:gauss:exp_in}
        -p(z;\alpha)\eta(\theta_g)^\top \Phi(z,y) = -p(z;\alpha)\theta_g^\sigma\begin{bmatrix}z\\1\end{bmatrix}y^2 - 2p(z;\alpha)\theta_g^\mu\begin{bmatrix}z\\1\end{bmatrix}y-p(z;\alpha)\frac{\left(-\theta_g^\mu\begin{bmatrix}z\\1\end{bmatrix}\right)^2}{\theta_g^\sigma\begin{bmatrix}z\\1\end{bmatrix}},
    \end{align}
    and the corresponding log-normalizer
    \begin{align}\label{eq:proof:theo:gauss:log_in}
        &\log \int_{\mathcal{Y}}\exp\left(-p(z;\alpha)\eta(\theta_g)^\top \Phi(z,y)\right)dy\\ &= \log \int_{\mathcal{Y}} \exp\left(-p(z;\alpha)\theta_g^\sigma\begin{bmatrix}z\\1\end{bmatrix}y^2 - 2p(z;\alpha)\theta_g^\mu\begin{bmatrix}z\\1\end{bmatrix}y-p(z;\alpha)\frac{\left(-\theta_g^\mu\begin{bmatrix}z\\1\end{bmatrix}\right)^2}{\theta_g^\sigma\begin{bmatrix}z\\1\end{bmatrix}}\right)dy\\
        &= \log\left(\sqrt{\frac{\pi}{p(z;\alpha)\theta_g^\sigma\begin{bmatrix}z\\1\end{bmatrix}}}\exp\left(\underbrace{\frac{\left(2p(z;\alpha)\theta_g^\mu\begin{bmatrix}z\\1\end{bmatrix}\right)^2}{4p(z;\alpha)\theta_g^\sigma\begin{bmatrix}z\\1\end{bmatrix}}-p(z;\alpha)\frac{\left(-\theta_g^\mu\begin{bmatrix}z\\1\end{bmatrix}\right)^2}{\theta_g^\sigma\begin{bmatrix}z\\1\end{bmatrix}}}_{0}\right)\right)\\
        &= \log\left(\sqrt{\frac{\pi}{p(z;\alpha)\theta_g^\sigma\begin{bmatrix}z\\1\end{bmatrix}}}\right) = \frac{1}{2}\log\left(\frac{\pi}{p(z;\alpha)\theta_g^\sigma\begin{bmatrix}z\\1\end{bmatrix}}\right).
    \end{align}
    So, replace the result of Equation~\ref{eq:proof:theo:gauss:exp_in} and Equation~\ref{eq:proof:theo:gauss:log_in} to the Exponential form of Density-Regressor, we obtain
    \begin{align}\label{eq:proof:theo:gauss:predictive}
        &p(y|x;\theta) = \exp\left(-p(z;\alpha)\eta(\theta_g)^\top \Phi(z,y) - \log \int_{\mathcal{Y}}\exp\left(-p(z;\alpha)\eta(\theta_g)^\top \Phi(z,y)\right)dy\right) \text{, where }z =f(x)\\
        &= \exp\left(-p(z;\alpha)\theta_g^\sigma\begin{bmatrix}z\\1\end{bmatrix}y^2 - 2p(z;\alpha)\theta_g^\mu\begin{bmatrix}z\\1\end{bmatrix}y-p(z;\alpha)\frac{\left(-\theta_g^\mu\begin{bmatrix}z\\1\end{bmatrix}\right)^2}{\theta_g^\sigma\begin{bmatrix}z\\1\end{bmatrix}} - \frac{1}{2}\log\left(\frac{\pi}{p(z;\alpha)\theta_g^\sigma\begin{bmatrix}z\\1\end{bmatrix}}\right)\right)\\
        &= \exp\{\frac{-1}{2}2\cdot p(z;\alpha)\theta_g^\sigma\begin{bmatrix}z\\1\end{bmatrix}y^2 + 2\cdot p(z;\alpha)\theta_g^\sigma\begin{bmatrix}z\\1\end{bmatrix}\frac{-\theta_g^\mu\begin{bmatrix}z\\1\end{bmatrix}}{\theta_g^\sigma\begin{bmatrix}z\\1\end{bmatrix}}y\\&-\left(\frac{-\theta_g^\mu\begin{bmatrix}z\\1\end{bmatrix}}{\theta_g^\sigma\begin{bmatrix}z\\1\end{bmatrix}}\right)^2\left(\frac{1}{2}2\cdot p(z;\alpha)\theta_g^\sigma\begin{bmatrix}z\\1\end{bmatrix}\right) - \frac{1}{2}\log\left(2\pi\frac{1}{2\cdot p(z;\alpha)\theta_g^\sigma\begin{bmatrix}z\\1\end{bmatrix}}\right)\}.
    \end{align}
    On the other hand, we have the Probability density function of the Gaussian distribution $\mathcal{N}(\mu,\sigma^2)$ has the form
    \begin{align}\label{eq:proof:theo:gauss:tmp_predictive}
        p(y|x;\theta) &= \exp\left(\frac{-1}{2\sigma^2}y^2 + \frac{\mu}{\sigma^2}y -\frac{1}{2\sigma^2}\mu^2 - \frac{1}{2}\log(2\pi\sigma^2)\right)\\
        &= \exp\left(\frac{-1}{2\sigma^2}y^2 + \frac{\mu}{\sigma^2}y - \left(\frac{1}{2\sigma^2}\mu^2 + \frac{1}{2}\log(2\pi\sigma^2)\right)\right).
    \end{align}
    Applying the result from Lemma~\ref{lemma:exp}, when the Exponential family has the form
    \begin{align}
        p(y|x;\theta) = \exp\left(\eta(\theta_g)^\top \Phi(z=f(x),y) - \log \int_{\mathcal{Y}}\exp\left(\eta(\theta_g)^\top \Phi(z=f(x),y)\right)dy\right),
    \end{align}
    we obtain
    \begin{align}\label{eq:proof:theo:gauss:log_tmp_predictive}
        \log \int_\mathcal{Y} \exp\left(\frac{-1}{2\sigma^2}y^2 + \frac{\mu}{\sigma^2}y\right)dy = \log\left(\sqrt{\pi2\sigma^2}\exp\left(\frac{\mu^2}{\sigma^4} \frac{2\sigma^2}{4}\right)\right) = \frac{1}{2\sigma^2}\mu^2 + \frac{1}{2}\log(2\pi\sigma^2).
    \end{align}
    Combining the result from Equation~\ref{eq:proof:theo:gauss:predictive}, Equation~\ref{eq:proof:theo:gauss:tmp_predictive}, and Equation~\ref{eq:proof:theo:gauss:log_tmp_predictive}, we get
    \begin{align}
        p(y|x;\theta) &= \exp\{\frac{-1}{2}\underbrace{2\cdot p(z;\alpha)\theta_g^\sigma\begin{bmatrix}z\\1\end{bmatrix}}_{\sigma^2(x,\theta)^{-1}}y^2 + \underbrace{2\cdot p(z;\alpha)\theta_g^\sigma\begin{bmatrix}z\\1\end{bmatrix}}_{\sigma^2(x,\theta)^{-1}}\underbrace{\frac{-\theta_g^\mu\begin{bmatrix}z\\1\end{bmatrix}}{\theta_g^\sigma\begin{bmatrix}z\\1\end{bmatrix}}}_{\mu(x,\theta)}y\\&-\left(\underbrace{\left(\frac{-\theta_g^\mu\begin{bmatrix}z\\1\end{bmatrix}}{\theta_g^\sigma\begin{bmatrix}z\\1\end{bmatrix}}\right)^2}_{\mu(x,\theta)^2}\underbrace{\left(\frac{1}{2}2\cdot p(z;\alpha)\theta_g^\sigma\begin{bmatrix}z\\1\end{bmatrix}\right)}_{\left(2\sigma^2(x.\theta)\right)^{-1}} + \frac{1}{2}\log\left(2\pi\underbrace{\frac{1}{2\cdot p(z;\alpha)\theta_g^\sigma\begin{bmatrix}z\\1\end{bmatrix}}}_{\sigma^2(x,\theta)}\right)\right)\}.
    \end{align}
    As a result, we obtain $p(y|x;\theta) \sim \mathcal{N}(\mu(x,\theta),\sigma^2(x,\theta))$, where
    \begin{align}
        \mu(x,\theta) = -\left(\theta_g^\sigma\begin{bmatrix}
            z\\1
        \end{bmatrix}\right)^{-1}\left(\theta_g^\mu{\begin{bmatrix}
            z\\1
        \end{bmatrix}}\right) \text{ and }
        \sigma^2(x,\theta) = \left(2\cdot p(z;\alpha)\theta_g^\sigma\begin{bmatrix}
            z\\1
        \end{bmatrix}\right)^{-1}
    \end{align} of Theorem~\ref{theo:gauss}.
\end{proof}

\subsection{Proof of Corollary~\ref{corol:gauss}}\label{proof:corol:gauss}
\begin{proof}
From the result of Theorem~\ref{theo:gauss}, we have
\begin{align}
        \mu(x,\theta) = -\left(\theta_g^\sigma\begin{bmatrix}
            z\\1
        \end{bmatrix}\right)^{-1}\left(\theta_g^\mu{\begin{bmatrix}
            z\\1
        \end{bmatrix}}\right) \text{ and }
        \sigma^2(x,\theta) = \left(2\cdot p(z;\alpha)\theta_g^\sigma\begin{bmatrix}
            z\\1
        \end{bmatrix}\right)^{-1}.
\end{align}
Hence, firstly, we can rewrite the mean by
\begin{align}
    \mu(x,\theta) = \underbrace{\left(2\cdot p(z;\alpha)\theta_g^\sigma\begin{bmatrix}z\\1\end{bmatrix}\right)^{-1}}_{\sigma^2(z,\theta)} -2 \cdot p(z;\alpha)\theta_g^\mu{\begin{bmatrix}z\\1\end{bmatrix}}.
\end{align}
On the other hand, we have the log of standard deviation as follows
\begin{align}
    \log(\sigma(x,\theta)) &= \log\left(\frac{1}{\sqrt{2\cdot p(z;\alpha)\theta_g^\sigma\begin{bmatrix}
            z\\1
        \end{bmatrix}}}\right) = -\log\left(2\cdot p(z;\alpha)\theta_g^\sigma\begin{bmatrix}
            z\\1
        \end{bmatrix}\right)^{\frac{1}{2}}\\
    &= -\frac{1}{2}\log\left(2\cdot p(z;\alpha)\theta_g^\sigma\begin{bmatrix}
            z\\1
        \end{bmatrix}\right)
    = -\frac{1}{2}\left[\log(2)+\log(p(z;\alpha))+\log(\theta_g^\sigma\begin{bmatrix}
            z\\1
        \end{bmatrix})\right]\\
    &=-\frac{1}{2}\left[\log(2)+\log(p(z;\alpha))+\theta_g^\sigma\begin{bmatrix}z\\1\end{bmatrix}\right] \text{ (parameterised directly the $\log$ by $\theta_g^\sigma\begin{bmatrix}z\\1\end{bmatrix}$)}.
\end{align}
Therefore, we obtain the variance
\begin{align}
    \sigma^2(x;\theta)=\exp\left(\log(\sigma(x,\theta))\right)^2
    = \exp\left(-\frac{1}{2}\left[\log(2)+\log(p(z;\alpha))+\theta_g^\sigma\begin{bmatrix}z\\1\end{bmatrix}\right]\right)^2
\end{align}
of Corollary~\ref{corol:gauss}.
\end{proof}

\subsection{Proof of Theorem~\ref{theorem:uncertainty_iid_ood}}
\begin{proof}\label{proof:theorem:uncertainty_iid_ood}
    Using Lemma~\ref{lemma:density},  we have $\lim_{d(z_{t},Z_s)\rightarrow \infty}p(z_{t};\alpha)\rightarrow 0$, i.e., if $d(z_{ood},Z_s)\rightarrow \infty$ then $p(z_{ood};\alpha)\rightarrow 0$. Therefore, we obtain
    \begin{align}
        \lim_{p(z_{ood};\alpha)\rightarrow 0} \sigma^2(x_{ood};\theta) = \lim_{p(z_{ood};\alpha)\rightarrow 0} \left(2\cdot p(z_{ood};\alpha)\theta_g^\sigma\begin{bmatrix}
            z\\1
        \end{bmatrix}\right)^{-1} = \infty.
    \end{align}
    Conversely, we have $\lim_{d(z_{t},Z_s)\rightarrow 0}p(z_{t};\alpha)\rightarrow \infty$, i.e., if $d(z_{iid},Z_s)\rightarrow 0$ then $p(z_{iid};\alpha)\rightarrow \infty$. Therefore, we obtain
    \begin{align}
        \lim_{p(z_{iid};\alpha)\rightarrow \infty} \sigma^2(x_{iid};\theta) = \lim_{p(z_{iid};\alpha)\rightarrow \infty} \left(2\cdot p(z_{iid};\alpha)\theta_g^\sigma\begin{bmatrix}
            z\\1
        \end{bmatrix}\right)^{-1} = 0
    \end{align}
    of Theorem~\ref{theorem:uncertainty_iid_ood}. 
\end{proof}

\subsection{Proof of Theorem~\ref{theorem:distance-aware}}
\begin{proof}\label{proof:theorem:distance-aware}
The proofs contain three parts. The first part shows density function $p(z_t;\alpha)$ is monotonically decreasing w.r.t. distance function $\mathbb{E}\left \| z_{t} - Z_s \right \|_{\mathcal{Z}}$. The second part shows the metric $u(x_t)$ is maximized when $p(z_t;\alpha) \rightarrow 0$. The third part shows $u(x_t)$ monotonically decreasing w.r.t. $p(z_t;\alpha)$ on the interval $\left ( 0, \infty\right ]$.

\textit{\underline{Part (1). The monotonic decrease of density function $p(z_t;\alpha)$ w.r.t. distance function $\mathbb{E}\left \| z_{t} - Z_s \right \|_{\mathcal{Z}}$}}: Consider the probability density function $p(z_t;\alpha)$ follows Normalizing-Flows which output the Gaussian distribution with mean (median) $\mu$ and standard deviation $\sigma$, then we have
\begin{align}
p(z_t;\alpha) = \frac{1}{\sigma \sqrt{2\pi}} \exp\left(\frac{-1}{2}\left( \frac{z_t - \mu}{\sigma} \right)^2\right).
\end{align}

Take derivative, we obtain
\begin{align}\label{eq:proof:pdf}
    \frac{d}{d z_t}p(z_t;\alpha) &= \left[ \frac{-1}{2}\left( \frac{z_t - \mu}{\sigma} \right)^2\right]' p(z_t;\alpha)
    = \frac{\mu - z_t}{\sigma^2} p(z_t;\alpha) \Rightarrow
     \begin{cases} 
\frac{d}{d z_t}p(z_t;\alpha) > 0 &\text{ if } z_t < \mu,\\ 
\\
\frac{d}{d z_t}p(z_t;\alpha) = 0 &\text{ if } z_t = \mu,\\
\\
\frac{d}{d z_t}p(z_t;\alpha) < 0 &\text{ if } z_t > \mu.
    \end{cases}
\end{align}

Consider the distance function $\mathbb{E}\left \| z_{t} - Z_s \right \|_{\mathcal{Z}}$ follows the absolute norm, then we have
\begin{align}
    \mathbb{E}\left \| z_{t} - Z_s \right \|_{\mathcal{Z}} = \mathbb{E}\left( \left| z_{t} - Z_s \right| \right) = \int_{-\infty}^{z_t} \mathbb{P}(Z_s \leq t) dt + \int_{z_t}^{+\infty} \mathbb{P}(Z_s \geq t) dt.
\end{align}

Take derivative, we obtain
\begin{align}\label{eq:proof:norm}
    \frac{d}{d z_t}\mathbb{E}\left \| z_{t} - Z_s \right \|_{\mathcal{Z}} = \mathbb{P}(Z_s \leq z_{t}) - \mathbb{P}(Z_s \geq z_{t}) \Rightarrow
     \begin{cases} 
\frac{d}{d z_t}\mathbb{E}\left \| z_{t} - Z_s \right \|_{\mathcal{Z}} < 0 &\text{ if } z_t < \mu,\\ 
\\
\frac{d}{d z_t}\mathbb{E}\left \| z_{t} - Z_s \right \|_{\mathcal{Z}} = 0 &\text{ if } z_t = \mu,\\
\\
\frac{d}{d z_t}\mathbb{E}\left \| z_{t} - Z_s \right \|_{\mathcal{Z}} > 0 &\text{ if } z_t > \mu.
    \end{cases}
\end{align}

Combining the result in Equation~\ref{eq:proof:pdf} and Equation~\ref{eq:proof:norm}, we have $p(z_t;\alpha)$ is maximized when $\mathbb{E}\left \| z_{t} - Z_s \right \|_{\mathcal{Z}}$ is minimized at the median $\mu$, $p(z_t;\alpha)$ increase when $\mathbb{E}\left \| z_{t} - Z_s \right \|_{\mathcal{Z}}$ decrease and vice versa. As a consequence, we obtain $p(z_t;\alpha)$ is monotonically decreasing w.r.t. distance function $\mathbb{E}\left \| z_{t} - Z_s \right \|_{\mathcal{Z}}$.

\textit{\underline{Part (2). The maximum of metric $u(x_t)$}}: Consider $u(x_t)=v(d(x_t,X_s))$ in Def.~\ref{def:distanceaware}, let $u(x_t)$ is the entropy of predictive distribution of Density-Regression 
$p(y|x;\theta)$, i.e., $u(x_t)=\mathrm{H}(Y|X=x_t)$, where $\mathrm{H}(Y|X=x_t)$ be the entropy of $Y$ conditioned on $X$ taking a certain value $x_t$ via $p(y|x;\theta)$. When the predictive distribution of Density-Regression follows the Normal distribution, i.e., $p(y|x;\theta) \sim \mathcal{N}(\mu(x,\theta),\sigma^2(x,\theta))$, we have 
\begin{align}
     u(x_t) &= \mathrm{H}(Y|X=x_t) = -\int_{\mathcal{Y}} p(y|x_t)\log(p(y|x_t))dy\\
     &= -\int_{\mathcal{Y}}\frac{1}{\sqrt{2\pi\sigma^2(x_t,\theta)}}\exp\left(-\frac{\left(y-\mu(x_t,\theta)^2\right)}{2\sigma^2(x_t,\theta)}\right)\log\left[\frac{1}{\sqrt{2\pi\sigma^2(x_t,\theta)}}\exp\left(-\frac{\left(y-\mu(x_t,\theta)^2\right)}{2\sigma^2(x_t,\theta)}\right)\right]dy\\
     &= -\int_{\mathcal{Y}}\frac{1}{\sqrt{2\pi\sigma^2(x_t,\theta)}}\exp\left(-\frac{\left(y-\mu(x_t,\theta)^2\right)}{2\sigma^2(x_t,\theta)}\right)\left[\log\left(\frac{1}{\sqrt{2\pi\sigma^2(x_t,\theta)}}\right) - \left(\frac{\left(y-\mu(x_t,\theta)^2\right)}{2\sigma^2(x_t,\theta)}\right)\right]dy\\
     &= -\log\left(\frac{1}{\sqrt{2\pi\sigma^2(x_t,\theta)}}\right)\underbrace{\int_{\mathcal{Y}}\underbrace{\frac{\exp\left(-\frac{\left(y-\mu(x_t,\theta)^2\right)}{2\sigma^2(x_t,\theta)}\right)}{\sqrt{2\pi\sigma^2(x_t,\theta)}}}_{p(y|x_t)}dy}_{1}+\int_{\mathcal{Y}}\left(\frac{\left(y-\mu(x_t,\theta)^2\right)}{2\sigma^2(x_t,\theta)}\right)\underbrace{\frac{\exp\left(-\frac{\left(y-\mu(x_t,\theta)^2\right)}{2\sigma^2(x_t,\theta)}\right)}{\sqrt{2\pi\sigma^2(x_t,\theta)}}}_{p(y|x_t)}dy\\
     &= \frac{1}{2}\log\left(2\pi\sigma^2(x_t,\theta)\right)+\frac{1}{2\sigma^2(x_t,\theta)}\int_{\mathcal{Y}}\left(y-\mu(x_t,\theta)\right)^2dy.
\end{align}
Combine with the fact that $\int_{\mathcal{Y}}\left(y-\mu(x_t,\theta)\right)^2dy= \sigma^2(x_t,\theta)$, we obtain
\begin{align}
    u(x_t) = \frac{1}{2}\log\left(2\pi\sigma^2(x_t,\theta)\right)+\frac{1}{2}.
\end{align}
Since $u(x_t)$ is now just a continuous function of its variance $\sigma^2(x_t,\theta)$, it is monotone increasing w.r.t. $\sigma^2(x_t,\theta)$ on the interval $(0,\infty]$. Therefore, we get $u(x_t)$ is maximized when $\sigma^2(x_t;\theta)$ is maximized. As a result, when $\sigma^2(x_t,\theta) = \left(2\cdot p(z_t;\alpha)\theta_g^\sigma\begin{bmatrix}z_t\\1\end{bmatrix}\right)^{-1}$, where $z_t = f(x_t)$, we obtain $u(x_t)$ is maximized when $p(z_t;\alpha) \rightarrow 0$, which will happen if $z_t$ is \acrshort{OOD} data (by the result in Thm.~\ref{theorem:uncertainty_iid_ood} and Proof~\ref{proof:theorem:uncertainty_iid_ood}).

\textit{\underline{Part (3). The monotonically decrease of metric $u(x_t)$ on the interval $\left ( 0, \infty\right ]$}}: Consider the function
\begin{align}
    \mathcal{F}(p(z_t;\alpha)) = \frac{1}{2}\log\left(2\pi\left(2\cdot p(z_t;\alpha)\theta_g^\sigma\begin{bmatrix}z_t\\1\end{bmatrix}\right)^{-1}\right)+\frac{1}{2}.
\end{align}

Let $a= p(z_t;\alpha)$, $b = \theta_g^\sigma\begin{bmatrix}z_t\\1\end{bmatrix}$, then
\begin{align}
    \mathcal{F}(a) = \frac{1}{2}\log\left(\frac{2\pi}{2ab}\right)+\frac{1}{2} = \frac{1}{2}\left[\log(\pi) - \log(ab)\right]+\frac{1}{2},
\end{align}
and we need to find $\frac{d}{d a}\mathcal{F}$. Take derivative, we obtain 
\begin{align}
    \frac{d}{d a}\mathcal{F} = \frac{-1}{2a} < 0 \text{ (due to $a \in \left(0, \infty\right]$)},
\end{align} combining with $u(x_t)$ is maximized if $a \rightarrow 0$, we obtain $u(x_t)$ decrease monotonically on the interval $\left(0,\infty \right]$.

Combining the result in \textit{Part (2).} $u(x_t)$ is maximized if $p(z_t;\alpha) \rightarrow 0$ which will happen if $z_t$ is \acrshort{OOD} data, and the result in \textit{Part (3).} $u(x_t)$ is decrease monotonically w.r.t. $p(z_t;\alpha)$ on the interval $\left ( 0, \infty\right ]$, which will happen if $x_t$ is closer to \acrshort{IID} data since the likelihood value $p(z_t;\alpha)$ increases, we obtain the distance awareness of $-p(z;\alpha) \theta_g^\top \Phi(z,y)$. 

Combining the result in \textit{Part (1).} $p(z_t;\alpha)$ is monotonically decreasing w.r.t. distance function $\mathbb{E}\left \| z_{t} - Z_s \right \|_{\mathcal{Z}}$ and the result \textit{distance awareness} of $-p(z;\alpha) \theta_g^\top \Phi(z,y)$, we obtain the conclusion: $p(y|x;\theta) \propto  \exp(-p(f(x);\alpha) \theta_g^\top \Phi(f(x),y))$ is distance aware on feature space $\mathcal{Z}$ of Theorem~\ref{theorem:distance-aware}. 
\end{proof}
\section{Experimental settings}\label{apd:settings}
In this appendix, we summarize the baselines that we compared in our experiments and provide more detail about our implementation as well as the demo code snippet.

\subsection{Baseline details}\label{apd:baselines}
We provide an exhaustive literature review of 10 \acrshort{SOTA} related methods which are used to make comparisons with our model:
\begin{itemize}
    \item \textbf{Deterministic~\acrshort{DNN}~\citep{vapnik1998erm}} corresponds to Deterministic Regression in Section~\ref{sec:background}.
    \item \textbf{Deterministic~Gaussian~\acrshort{DNN}~\citep{chua2018deepRL}} is discussed in Section~\ref{sec:background}.
    \item \textbf{Quantile~Regression}~\citep{romano2019conformalized} makes the forecast $h: \mathcal{X} \rightarrow \mathbb{R}^2$ to output the prediction intervals with the lower quantile $q^{\alpha/2}(x) = \inf\{y\in \mathbb{R}: F_{Y|X}(y|x) > \alpha/2\}$ and upper quantile $q^{1-\alpha/2}(x) = \inf\{y\in \mathbb{R}: F_{Y|X}(y|x) > (1-\alpha/2)\}$, where $F_{Y|X}$ is the conditional CDF. Then, this forecast $h$ will be trained by using the pinball loss~\citep{Steinwart2011Estimating}.
    \item \textbf{\acrshort{MC}~Dropout~\citep{gal2016mcdropout}} includes dropout regularization method in the model. In test-time, it uses \acrshort{MC} sampling by dropout to make different predictions, then obtain the final mean and variance by Equation.~\ref{eq:sampling-based}.
    \item \textbf{\acrshort{MFVI}~\acrshort{BNN}~\citep{wen2018flipout}} uses the \acrshort{BNN} by putting distribution over the weight by mean and variance per each weight. In test-time, it performs prediction by using Equation.~\ref{eq:sampling-based}. Because each weight consists of mean and variance, the total model weights will double as the Deterministic~\acrshort{DNN}.
    \item \textbf{\acrshort{EDL}~\citep{amini2020deep}} is based on Evidential Deep Learning~\citep{sensoy2018evidential} by making use of conjugate prior property in Bayesian Inference to compute posterior distribution in closed-form. This approach is sensitive to hyper-parameters by requiring to selection of Prior’s parameters.
    \item \textbf{\acrshort{NatPN}~\citep{charpentier2022natural}} is the closest to our work by also estimating the density function on the marginal feature space. However, it is based on \acrshort{EDL} so their loss function and the regressor function are different by the Bayesian approach. Due to belonging to the Bayesian perspective like \acrshort{EDL}, it needs to select a "good" Prior distribution, which is often difficult in practice.
    \item \textbf{\acrshort{SNGP}~\citep{Liu2020SNGP}} is a combination of the last \acrshort{GP} layer with Spectral Normalization to the hidden layers. This algorithm is primarily designed for the classification, however, we can extend it to the regression task by making the \acrshort{GP} layer output the mean and variance like Deterministic~Gaussian~\acrshort{DNN}.
    \item \textbf{\acrshort{DUE}~\citep{vanamersfoort2022feature}} is an extension version of \acrshort{SNGP} by constrain Deep Kernel Learning’s feature extractor to approximately preserve distances through a bi-Lipschitz constraint.
    \item \textbf{Deep Ensembles~\citep{lakshminarayanan2017ensemble}} includes multiple Deterministic~\acrshort{DNN} trained with different seeds. In test-time, the final prediction is calculated from the mean of the list prediction of the ensemble by Equation.~\ref{eq:sampling-based}. Due to aggregates from multiple deterministic models, the latency and total of model weights needed to store will increase linearly w.r.t. the number of models.
\end{itemize}

\subsection{Implementation}\label{apd:implementation}
\textbf{Dataset, source code, and hyper-parameter setting.}
Our source code is available at \href{https://github.com/Angie-Lab-JHU/density_regression}{https://github.com/Angie-Lab-JHU/density\_regression}, including our notebook demo on the toy dataset, scripts to download the benchmark dataset, setup for environment configuration, and our provided code (detail in README.md). All baselines follow the same hyper-parameters setting, data-split, and evaluation technique in training. Specifically, for the Toy-dataset, Benchmark UCI, and monocular depth estimation, we follow \acrshort{EDL}~\citep{amini2020deep}. For Time series weather forecasting, we follow the Time series forecasting Tensorflow tutorial~\citep{tensorflow2015-whitepaper}. Regarding the density function, we use the ``KernelDensity(kernel = 'exponential', metric = "l1")'' for the Toy-dataset, and we reuse the Normalizing Flows architecture following Density-Softmax~\citep{bui2023densitysoftmax} for the remained dataset.

\textbf{Computing system.}
We test our model on three different settings, including (1) a single GPU: NVIDIA Tesla~K80 accelerator-12GB GDDR5 VRAM with 8-CPUs: Intel(R) Xeon(R) Gold 6248R CPU @ 3.00GHz with 8GB RAM per each; (2) a single GPU: NVIDIA RTX~A5000-24564MiB with 8-CPUs: AMD Ryzen Threadripper 3960X 24-Core with 8GB RAM per each; and (3) a single GPU: NVIDIA A100-PCIE-40GB with 8 CPUs: Intel(R) Xeon(R) Gold 6248R CPU @ 3.00GHz with 8GB RAM per each.

\subsection{Demo notebook code for Algorithm~\ref{alg:algorithm}}\label{apd:code}
\begin{tabular}{cc}
\begin{minipage}{0.1\linewidth}
\end{minipage}&
\begin{minipage}{0.96\linewidth}
\begin{minted}
[
frame=lines,
framesep=2mm,
baselinestretch=1,
bgcolor=LightGray,
fontsize=\footnotesize,
linenos
]
{python}
import tensorflow as tf

#Define a features extractor f.
model = tf.keras.Sequential([
	tf.keras.layers.Dense(100, activation = "relu"),
	tf.keras.layers.Dense(100, activation = "relu"),
])
#Define a regressor g.
regressor = tf.keras.layers.Dense(2)

#Define a tf step function to pre-train model w.r.t. Eq. 11.
@tf.function
def pre_train_step(x, y):
	with tf.GradientTape() as tape:
		y_pred = regressor(model(x, training = True), training = True)
		M_ys, M_ymu = tf.split(y_pred, 2, axis = -1)
		log_std = -1/2 * (tf.math.log(2.) + M_ys)
		var = tf.exp(log_std) ** 2
		mean = var * (-2 * M_ymu)
		loss_value = tf.reduce_mean(2 * log_std + ((y - mean) / tf.exp(log_std)) ** 2)

	list_weights = model.trainable_weights + regressor.trainable_weights
	grads = tape.gradient(loss_value, list_weights)
	optimizer.apply_gradients(zip(grads, list_weights))
	return loss_value

#Define a tf step function to re-update the regressor by feature density model w.r.t. Eq. 15.
@tf.function
def train_step(z, y, loglikelihood):
	with tf.GradientTape() as tape:
		y_pred = regressor(z, training = True)
		M_ys, M_ymu = tf.split(y_pred, 2, axis = -1)
		log_std = -1/2 * (tf.math.log(2.) + loglikelihood + M_ys)
		var = tf.exp(log_std) ** 2
		mean = var * (-2 * tf.exp(loglikelihood) * M_ymu)
		loss_value = tf.reduce_mean(2 * log_std + ((y - mean) / tf.exp(log_std)) ** 2)

	list_weights = regressor.trainable_weights
	grads = tape.gradient(loss_value, list_weights)
	optimizer.apply_gradients(zip(grads, list_weights))
	return loss_value

#Define a tf step function to make inference w.r.t. Cor. 3.2.
@tf.function
def test_step(z, loglikelihood):
	y_pred = regressor(z, training = False)
	M_ys, M_ymu = tf.split(y_pred, 2, axis = -1)
	log_std = -1/2 * (tf.math.log(2.) + loglikelihood + M_ys)
	var = tf.exp(log_std) ** 2
	mean = var * (-2 * tf.exp(loglikelihood) * M_ymu)
	y_pred = tf.concat([mean, var], 1)
	return y_pred
\end{minted}
\end{minipage}
\end{tabular}
\section{Additional results}\label{apd:results}
In this appendix, we collect additional results that we deferred from the main paper. 
\vspace{-0.1in}
\subsection{Monocular depth estimation}\label{apd:depth}
\vspace{-0.1in}
\begin{figure}[ht!]
    \centering
    \includegraphics[width=1.0\linewidth]{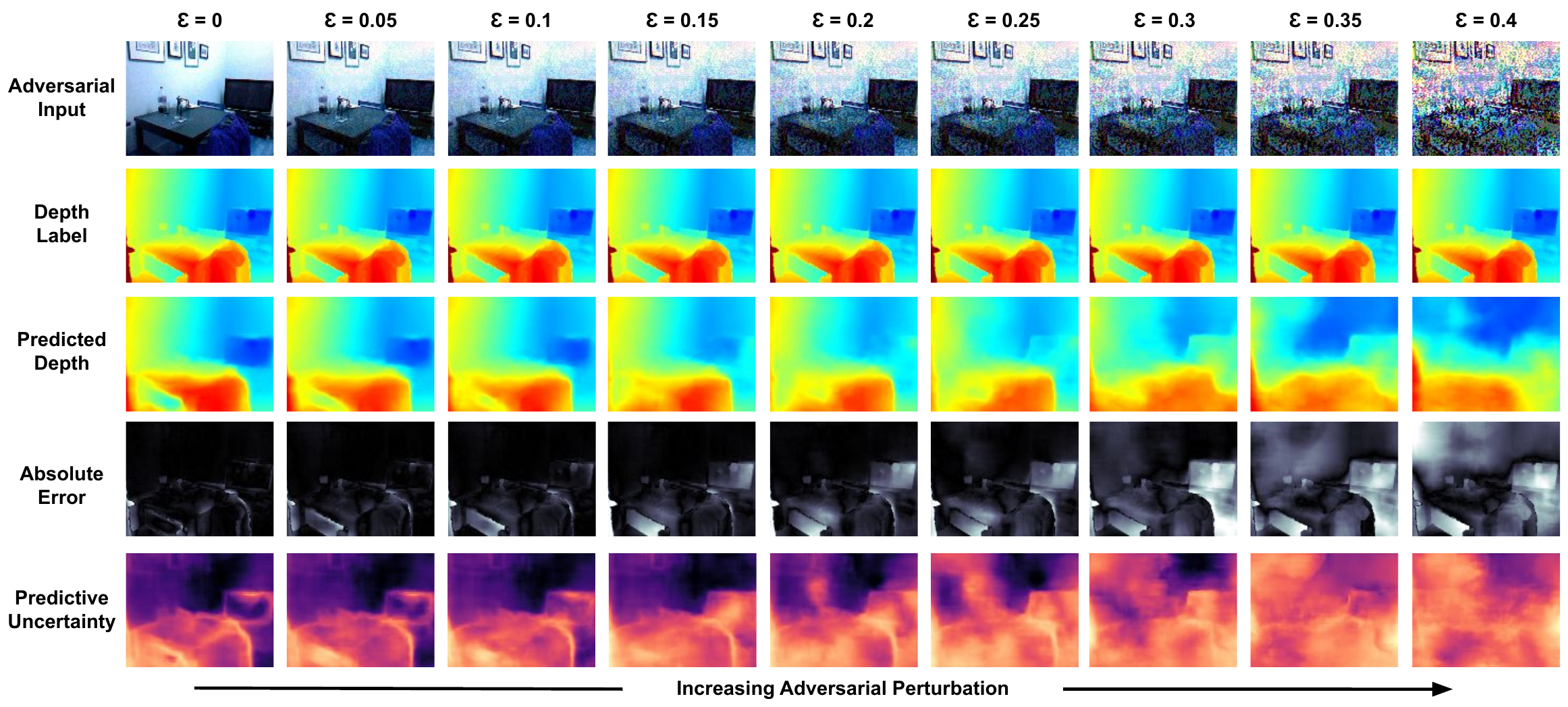}
    \vspace{-0.2in}
    \caption{Visualization of an example with different shift intensities and Density-Regression's performance, including predicted depth, absolute prediction error, and predictive uncertainty per pixel. Density-Regression confidence on \acrshort{IID} ($\epsilon = 0$), and the confidence decreases w.r.t. the increasing of the shift intensities on \acrshort{OOD}.}
    \vspace{-0.1in}
\end{figure}

\vspace{-0.1in}
\subsection{Time-series}\label{apd:time_series}
\vspace{-0.1in}
\begin{figure}[ht!]
    \centering
    \includegraphics[width=0.99\linewidth]{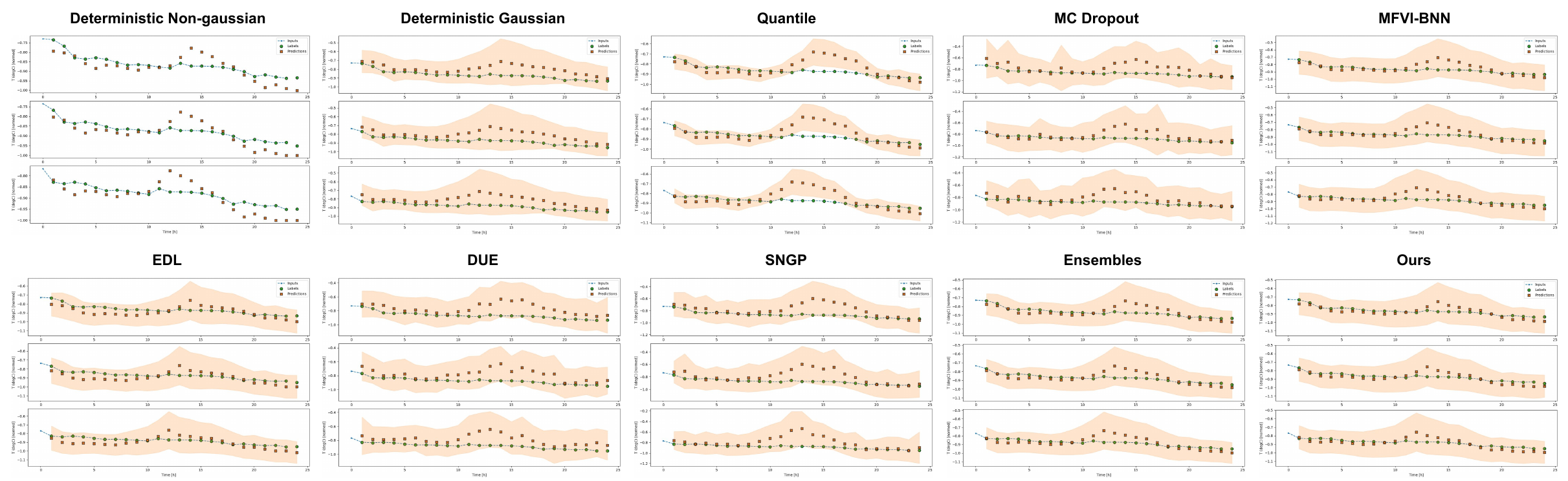}
    \vspace{-0.1in}
    \caption{Comparison between methods in terms of temperature in Celsius (normalized) for every hour on three days on \acrshort{IID} data. Our Density-Regression is still certain on \acrshort{IID}.}
    \vspace{-0.1in}
\end{figure}

\begin{figure}[ht!]
    \centering
    \includegraphics[width=0.99\linewidth]{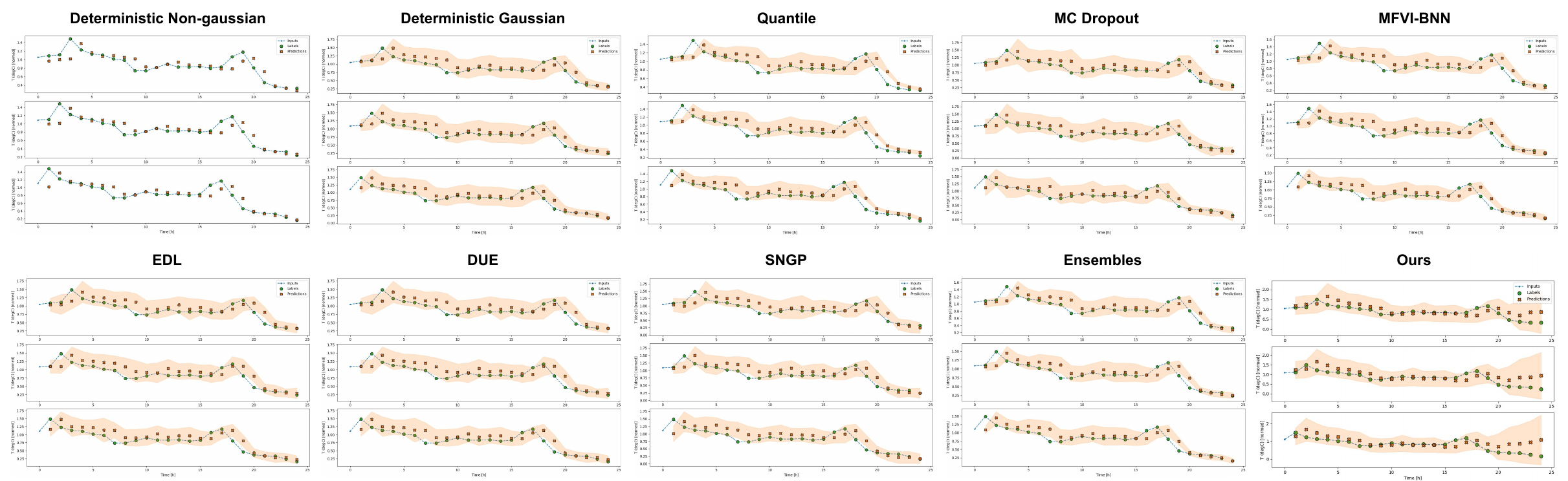}
    \vspace{-0.1in}
    \caption{Comparison between methods in terms of temperature in Celsius (normalized) for every hour on three days on \acrshort{OOD} data. Our Density-Regression is more uncertain on \acrshort{OOD}.}
    \vspace{-0.6in}
\end{figure}

\begin{table}[ht!]
  \begin{minipage}[t!]{1.0\textwidth}
    \centering
    \includegraphics[width=0.8\linewidth]{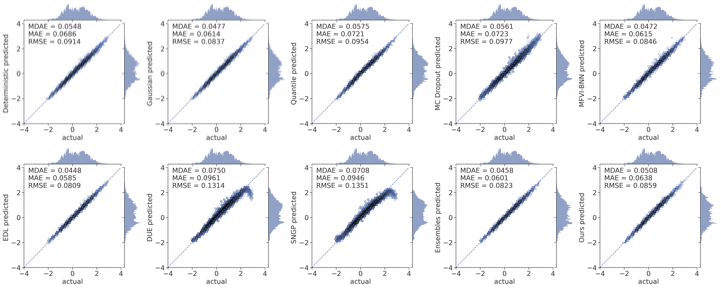}
  \end{minipage}
  \begin{minipage}[t!]{1.0\textwidth}
    \centering
    \includegraphics[width=0.8\linewidth]{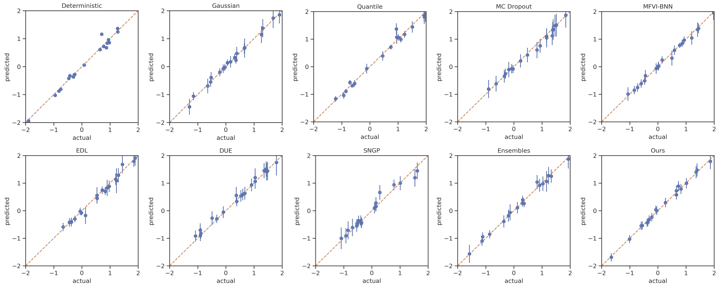}
  \end{minipage}
  \begin{minipage}[t!]{1.0\textwidth}
    \centering
    \includegraphics[width=0.8\linewidth]{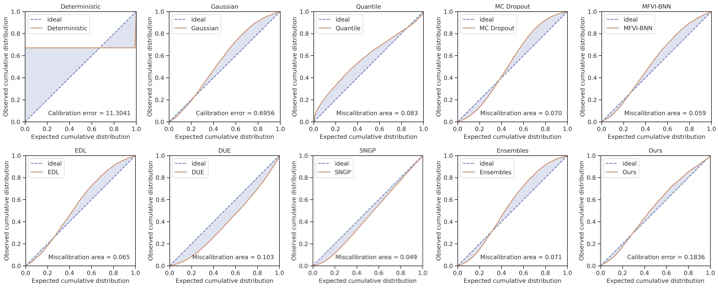}
  \end{minipage}
  \begin{minipage}[t!]{1.0\textwidth}
    \centering
    \includegraphics[width=0.8\linewidth]{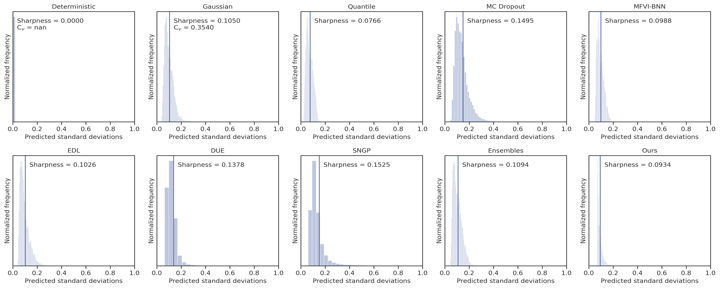}
    \captionof{figure}{Detailed visualization of regression error, parity plot, calibration curve, and distribution plots with sharpness on the \acrshort{IID} Time series weather forecasting.}
  \end{minipage}
\end{table}

\begin{table}[ht!]
  \begin{minipage}[t!]{1.0\textwidth}
    \centering
    \includegraphics[width=0.8\linewidth]{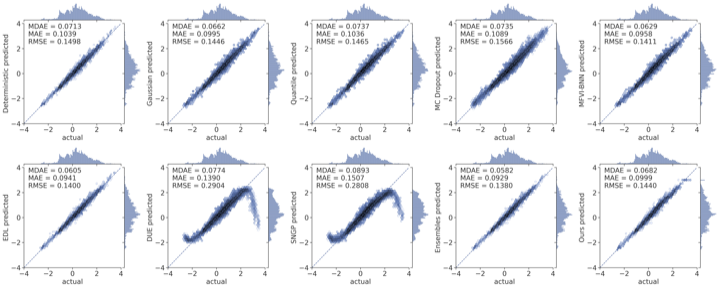}
  \end{minipage}
  \begin{minipage}[t!]{1.0\textwidth}
    \centering
    \includegraphics[width=0.8\linewidth]{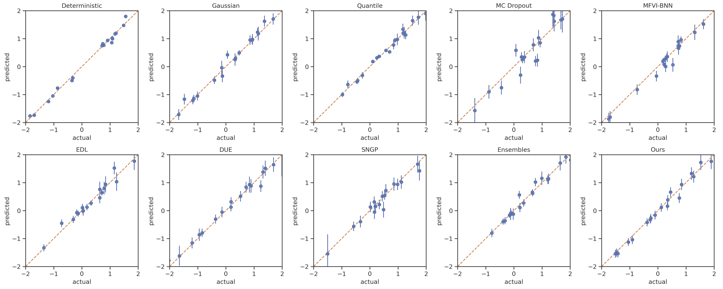}
  \end{minipage}
  \begin{minipage}[t!]{1.0\textwidth}
    \centering
    \includegraphics[width=0.8\linewidth]{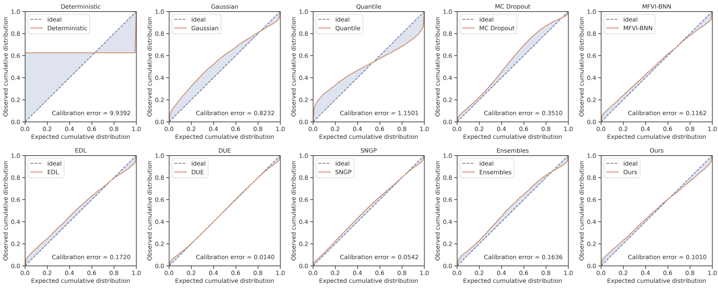}
  \end{minipage}
  \begin{minipage}[t!]{1.0\textwidth}
    \centering
    \includegraphics[width=0.8\linewidth]{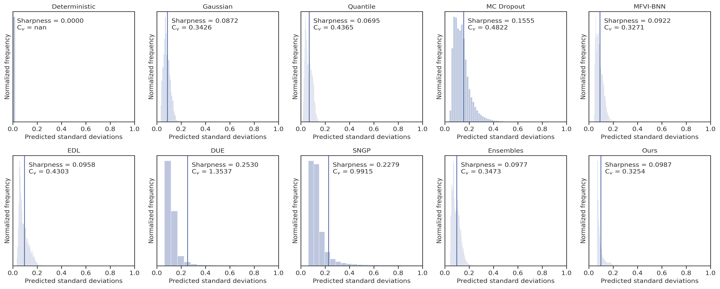}
    \captionof{figure}{Detailed visualization of regression error, parity plot, calibration curve, and distribution plots with sharpness on the \acrshort{OOD} Time series weather forecasting.}
  \end{minipage}
\end{table}

\clearpage
\subsection{Benchmark UCI}\label{apd:uci}
\begin{table}[ht!]
\begin{minipage}[t!]{0.5\textwidth}
    \caption{UCI: Boston housing}
    \centering
    \scalebox{0.66}{
    \begin{tabular}{lcccc}
    \toprule
    \textbf{Method} & \textbf{NLL ($\downarrow$)} & \textbf{RMSE ($\downarrow$)} & \textbf{Cal ($\downarrow$)} & \textbf{Sharp ($\downarrow$)}\\
    \midrule
    Deterministic & 2.64 $\pm$ 0.26 & 3.05 $\pm$ 0.21 & 0.39 $\pm$ 0.25 & 0.45 $\pm$ 0.08\\
    Quantile & 2.73 $\pm$ 0.75 & 3.03 $\pm$ 0.19 & 0.95 $\pm$ 0.49 & 0.30 $\pm$ 0.02\\
    MC~Dropout & 2.42 $\pm$ 0.09 & 3.03 $\pm$ 0.26 & 0.54 $\pm$ 0.14 & 0.49 $\pm$ 0.06\\
    MFVI-BNN & 6.06 $\pm$ 0.51 & 10.9 $\pm$ 1.82 & 6.51 $\pm$ 0.71 & 17.2 $\pm$ 5.58\\
    EDL & 2.35 $\pm$ 0.06 & 3.02 $\pm$ 0.21 & 0.41 $\pm$ 0.19 & 0.76 $\pm$ 0.23\\
    SNGP & 2.37 $\pm$ 0.06 & 4.62 $\pm$ 0.33 & 0.42 $\pm$ 0.42 & 0.49 $\pm$ 0.03\\
    DUE & 2.35 $\pm$ 0.09 & 3.21 $\pm$ 0.28 & 0.41 $\pm$ 0.28 & 0.44 $\pm$ 0.05\\
    Ensembles & \textbf{2.30 $\pm$ 0.07} & \textbf{2.91 $\pm$ 0.11} & 0.25 $\pm$ 0.16 & 0.40 $\pm$ 0.03\\
    \textbf{Ours} & 2.46 $\pm$ 0.04 & 2.93 $\pm$ 0.11 & \textbf{0.22 $\pm$ 0.12} & \textbf{0.37 $\pm$ 0.01}\\
    \bottomrule
    \end{tabular}}
    \label{tab:boston}
\end{minipage}
\begin{minipage}{0.5\textwidth}
    \caption{UCI: Concrete}
    \centering
    \scalebox{0.66}{
    \begin{tabular}{lcccc}
    \toprule
    \textbf{Method} & \textbf{NLL ($\downarrow$)} & \textbf{RMSE ($\downarrow$)} & \textbf{Cal ($\downarrow$)} & \textbf{Sharp ($\downarrow$)}\\
    \midrule
    Deterministic & 3.02 $\pm$ 0.09 & 5.58 $\pm$ 0.92 & 0.71 $\pm$ 0.78 & 0.39 $\pm$ 0.06\\
    Quantile & 3.24 $\pm$ 0.14 & 5.94 $\pm$ 0.54 & 1.04 $\pm$ 1.22 & 0.30 $\pm$ 0.03\\
    MC~Dropout & 3.23 $\pm$ 0.05 & 6.33 $\pm$ 0.39 & 0.48 $\pm$ 0.38 & 0.50 $\pm$ 0.07\\
    MFVI-BNN & 6.93 $\pm$ 0.18 & 17.2 $\pm$ 4.51 & 8.64 $\pm$ 1.15 & 20.5 $\pm$ 6.13 \\
    EDL & 3.03 $\pm$ 0.14 & 5.18 $\pm$ 0.5 & 2.24 $\pm$ 0.34 & 3.23 $\pm$ 2.26\\
    SNGP & 3.45 $\pm$ 0.07 & 7.59 $\pm$ 0.57 & 0.63 $\pm$ 0.61 & 0.56 $\pm$ 0.04\\
    DUE & 3.47 $\pm$ 0.07 & 7.82 $\pm$ 0.58 & 0.48 $\pm$ 0.45 & 0.56 $\pm$ 0.06\\
    Ensembles & \textbf{2.93 $\pm$ 0.04} & \textbf{4.82 $\pm$ 0.18} & 0.44 $\pm$ 0.24 & 0.37 $\pm$ 0.03\\
    \textbf{Ours} & 2.97 $\pm$ 0.08 & 4.94 $\pm$ 0.48 & 0\textbf{.23 $\pm$ 0.33} & \textbf{0.30 $\pm$ 0.01} \\
    \bottomrule
    \end{tabular}}
    \label{tab:concrete}
\end{minipage}
\end{table}

\begin{table}[ht!]
\begin{minipage}[t!]{0.5\textwidth}
    \caption{UCI: Energy}
    \centering
    \scalebox{0.66}{
    \begin{tabular}{lcccc}
    \toprule
    \textbf{Method} & \textbf{NLL ($\downarrow$)} & \textbf{RMSE ($\downarrow$)} & \textbf{Cal ($\downarrow$)} & \textbf{Sharp ($\downarrow$)}\\
    \midrule
    Deterministic & 1.95 $\pm$ 0.27 & 2.18 $\pm$ 0.11 & 1.10 $\pm$ 1.40 & 0.21 $\pm$ 0.01\\
    Quantile & 1.91 $\pm$ 0.51 & 2.21 $\pm$ 0.16 & 1.15 $\pm$ 1.01 & 0.18 $\pm$ 0.02\\
    MC~Dropout & 2.11 $\pm$ 0.08 & 2.94 $\pm$ 0.09 & 0.59 $\pm$ 0.64 & 0.36 $\pm$ 0.05\\
    MFVI-BNN & 3.14 $\pm$ 0.57 & 3.00 $\pm$ 0.15 & 1.77 $\pm$ 1.40 & 6.99 $\pm$ 2.31 \\
    EDL & 1.64 $\pm$ 0.12 & 2.23 $\pm$ 0.13 & 3.05 $\pm$ 2.33 & 12.1 $\pm$ 4.12\\
    SNGP & 1.77 $\pm$ 0.11 & 2.18 $\pm$ 0.18 & 0.82 $\pm$ 0.82 & 0.27 $\pm$ 0.02\\
    DUE & 1.90 $\pm$ 0.09 & 2.61 $\pm$ 0.24 & \textbf{0.57 $\pm$ 0.64} & \textbf{0.29 $\pm$ 0.04}\\
    Ensembles & \textbf{1.53 $\pm$ 0.07} & \textbf{2.14 $\pm$ 0.07} & 0.67 $\pm$ 0.54 & 0.24 $\pm$ 0.02\\
    \textbf{Ours} & 1.56 $\pm$ 0.13 & 2.15 $\pm$ 0.09 & 0.77 $\pm$ 0.84 & 0.22 $\pm$ 0.02\\
    \bottomrule
    \end{tabular}}
    \label{tab:energy}
\end{minipage}
\begin{minipage}{0.5\textwidth}
    \caption{UCI: Kin8nm}
    \centering
    \scalebox{0.66}{
    \begin{tabular}{lcccc}
    \toprule
    \textbf{Method} & \textbf{NLL ($\downarrow$)} & \textbf{RMSE ($\downarrow$)} & \textbf{Cal ($\downarrow$)} & \textbf{Sharp ($\downarrow$)}\\
    \midrule
    Deterministic & -1.17 $\pm$ 0.03 & 0.08 $\pm$ 0.00 & 0.13 $\pm$ 0.15 & 0.30 $\pm$ 0.02\\
    Quantile & -1.08 $\pm$ 0.05 & 0.08 $\pm$ 0.00 & 0.54 $\pm$ 0.42 & 0.26 $\pm$ 0.02\\
    MC~Dropout & -0.91 $\pm$ 0.02 & 0.11 $\pm$ 0.01 & 0.31 $\pm$ 0.17 & 0.53 $\pm$ 0.02\\
    MFVI-BNN & -0.03 $\pm$ 0.11 & 0.18 $\pm$ 0.03 & 1.58 $\pm$ 1.57 & 5.12 $\pm$ 2.00 \\
    EDL & -1.07 $\pm$ 0.05 & 0.08 $\pm$ 0.00 & 0.83 $\pm$ 0.00 & 9.99 $\pm$ 0.30 \\
    SNGP & -1.05 $\pm$ 0.02 & 0.09 $\pm$ 0.00 & 0.18 $\pm$ 0.09 & 0.39 $\pm$ 0.01\\
    DUE & -1.02 $\pm$ 0.02 & 0.09 $\pm$ 0.00 & 0.16 $\pm$ 0.08 & 0.40 $\pm$ 0.01\\
    Ensembles & \textbf{-1.25 $\pm$ 0.02} & 0.08 $\pm$ 0.00 & 0.40 $\pm$ 0.33 & 0.33 $\pm$ 0.02\\
    \textbf{Ours} & -1.22 $\pm$ 0.02 & 0.08 $\pm$ 0.00 & \textbf{0.11 $\pm$ 0.19} & \textbf{0.30 $\pm$ 0.01}\\
    \bottomrule
    \end{tabular}}
    \label{tab:kin8nm}
\end{minipage}
\end{table}

\begin{table}[ht!]
\begin{minipage}[t!]{0.5\textwidth}
    \caption{UCI: Naval propulsion plant}
    \centering
    \scalebox{0.66}{
    \begin{tabular}{lcccc}
    \toprule
    \textbf{Method} & \textbf{NLL ($\downarrow$)} & \textbf{RMSE ($\downarrow$)} & \textbf{Cal ($\downarrow$)} & \textbf{Sharp ($\downarrow$)}\\
    \midrule
    Deterministic & -4.93 $\pm$ 0.62 & 0.00 $\pm$ 0.00 & 2.58 $\pm$ 1.39 & 0.30 $\pm$ 0.05\\
    Quantile & -4.79 $\pm$ 0.21 & 0.00 $\pm$ 0.00 & 3.04 $\pm$ 1.12 & 0.39 $\pm$ 0.02\\
    MC~Dropout & -4.69 $\pm$ 0.07 & 0.00 $\pm$ 0.00 & 0.66 $\pm$ 0.34 & 0.52 $\pm$ 0.05\\
    MFVI-BNN & -4.07 $\pm$ 0.11 & 0.03 $\pm$ 0.01 & 4.10 $\pm$ 0.42 & 1.61 $\pm$ 0.11 \\
    EDL & -5.37 $\pm$ 0.27 & 0.00 $\pm$ 0.00 & 8.25 $\pm$ 0.00 & 100 $\pm$ 0.00 \\
    SNGP & -4.05 $\pm$ 0.08 & 0.00 $\pm$ 0.00 & \textbf{0.54 $\pm$ 0.49} & 0.74 $\pm$ 0.02\\
    DUE & -3.89 $\pm$ 0.10 & 0.01 $\pm$ 0.00 & 0.71 $\pm$ 0.44 & 0.77 $\pm$ 0.03\\
    Ensembles & \textbf{-5.68 $\pm$ 0.17} & 0.00 $\pm$ 0.00& 2.52 $\pm$ 1.04 & \textbf{0.30 $\pm$ 0.03}\\
    \textbf{Ours} & -5.42 $\pm$ 0.09 & 0.00 $\pm$ 0.00 & 2.29 $\pm$ 0.96 & 0.38 $\pm$ 0.01\\
    \bottomrule
    \end{tabular}}
    \label{tab:naval}
\end{minipage}
\begin{minipage}{0.5\textwidth}
    \caption{UCI: Power plant}
    \centering
    \scalebox{0.66}{
    \begin{tabular}{lcccc}
    \toprule
    \textbf{Method} & \textbf{NLL ($\downarrow$)} & \textbf{RMSE ($\downarrow$)} & \textbf{Cal ($\downarrow$)} & \textbf{Sharp ($\downarrow$)}\\
    \midrule
    Deterministic & 2.85 $\pm$ 0.03 & 4.03 $\pm$ 0.08 & 0.19 $\pm$ 0.18 & 0.24 $\pm$ 0.02\\
    Quantile & 3.03 $\pm$ 0.03 & 4.38 $\pm$ 0.05 & 0.34 $\pm$ 0.11 & 0.19 $\pm$ 0.01\\
    MC~Dropout & 2.85 $\pm$ 0.02 & 4.22 $\pm$ 0.08 & 0.22 $\pm$ 0.24 & 0.27 $\pm$ 0.01\\
    MFVI-BNN & 4.02 $\pm$ 0.10 & 4.55 $\pm$ 0.31 & 10.7 $\pm$ 7.58 & 9.11 $\pm$ 3.04 \\
    EDL & 2.82 $\pm$ 0.02 & 4.07 $\pm$ 0.08 & 8.21 $\pm$ 0.07 & 9.61 $\pm$ 6.72\\
    SNGP & 2.81 $\pm$ 0.01 & 4.10 $\pm$ 0.06 & 0.12 $\pm$ 0.09 & 0.25 $\pm$ 0.01\\
    DUE & 2.84 $\pm$ 0.02 & 4.19 $\pm$ 0.08 & 0.18 $\pm$ 0.26 & 0.26 $\pm$ 0.01\\
    Ensembles & \textbf{2.78 $\pm$ 0.01} & \textbf{3.99 $\pm$ 0.05} & 0.18 $\pm$ 0.18 & 0.23 $\pm$ 0.01\\
    \textbf{Ours} & 2.80 $\pm$ 0.02 & 4.02 $\pm$ 0.06 & \textbf{0.11 $\pm$ 0.12} & \textbf{0.22 $\pm$ 0.02}\\
    \bottomrule
    \end{tabular}}
    \label{tab:power}
\end{minipage}
\end{table}

\begin{table}[ht!]
\begin{minipage}[t!]{0.5\textwidth}
    \caption{UCI: Protein}
    \centering
    \scalebox{0.66}{
    \begin{tabular}{lcccc}
    \toprule
    \textbf{Method} & \textbf{NLL ($\downarrow$)} & \textbf{RMSE ($\downarrow$)} & \textbf{Cal ($\downarrow$)} & \textbf{Sharp ($\downarrow$)}\\
    \midrule
    Deterministic & 2.90 $\pm$ 0.02 & 4.63 $\pm$ 0.07 & 0.17 $\pm$ 0.09 & 0.76 $\pm$ 0.02\\
    Quantile & 3.19 $\pm$ 0.15 & 5.19 $\pm$ 0.04 & 3.26 $\pm$ 0.21 & 0.60 $\pm$ 0.01 \\
    MC~Dropout & 2.94 $\pm$ 0.07 & 4.85 $\pm$ 0.02 & 0.23 $\pm$ 0.08 & 3.76 $\pm$ 1.01\\
    MFVI-BNN & 3.68 $\pm$ 0.53 & 5.97 $\pm$ 0.55 & 3.54 $\pm$ 1.87 & 4.19 $\pm$ 1.44 \\
    EDL & 3.20 $\pm$ 0.22 & 5.01 $\pm$ 0.45 & 3.50 $\pm$ 0.10 & 9.16 $\pm$ 3.22\\
    SNGP & \textbf{2.82 $\pm$ 0.01} & \textbf{4.57 $\pm$ 0.02} & 0.14 $\pm$ 0.04 & 0.78 $\pm$ 0.01\\
    DUE & 2.86 $\pm$ 0.02 & 4.69 $\pm$ 0.04 & 0.16 $\pm$ 0.07 & 0.79 $\pm$ 0.02\\
    Ensembles & \textbf{2.83 $\pm$ 0.01} & \textbf{4.58 $\pm$ 0.01} & 0.12 $\pm$ 0.04 & 0.79 $\pm$ 0.02\\
    \textbf{Ours} & 2.84 $\pm$ 0.01 & 4.61 $\pm$ 0.02 & \textbf{0.10 $\pm$ 0.04} & \textbf{0.73 $\pm$ 0.01}\\
    \bottomrule
    \end{tabular}}
    \label{tab:protein}
\end{minipage}
\begin{minipage}{0.5\textwidth}
    \caption{UCI: Yacht}
    \centering
    \scalebox{0.66}{
    \begin{tabular}{lcccc}
    \toprule
    \textbf{Method} & \textbf{NLL ($\downarrow$)} & \textbf{RMSE ($\downarrow$)} & \textbf{Cal ($\downarrow$)} & \textbf{Sharp ($\downarrow$)}\\
    \midrule
    Deterministic & 1.29 $\pm$ 0.28 & 2.29 $\pm$ 0.56 & 0.93 $\pm$ 0.66 & 0.15 $\pm$ 0.03\\
    Quantile & 2.58 $\pm$ 0.93 & 3.90 $\pm$ 0.26 & 2.13 $\pm$ 1.72 & 0.17 $\pm$ 0.04 \\
    MC~Dropout & 1.82 $\pm$ 0.13 & 2.76 $\pm$ 0.38 & 2.19 $\pm$ 0.65 & 0.28 $\pm$ 0.03\\
    MFVI-BNN & 9.40 $\pm$ 1.14 & 22.6 $\pm$ 6.26 & 7.68 $\pm$ 0.39 & 28.9 $\pm$ 4.01 \\
    EDL & 1.12 $\pm$ 0.16 & 2.48 $\pm$ 0.53 & 1.42 $\pm$ 1.25 & 0.30 $\pm$ 0.07\\
    SNGP & 1.16 $\pm$ 0.17 & 6.53 $\pm$ 0.49 & 1.29 $\pm$ 0.87 & 0.49 $\pm$ 0.05\\
    DUE & 1.22 $\pm$ 0.26 & 4.56 $\pm$ 1.23 & 1.38 $\pm$ 1.02 & 0.40 $\pm$ 0.08\\
    Ensembles & \textbf{1.18 $\pm$ 0.10} & 2.22 $\pm$ 0.21 & 2.02 $\pm$ 0.32 & 0.17 $\pm$ 0.02\\
    \textbf{Ours} & 1.24 $\pm$ 0.19 & \textbf{2.20 $\pm$ 0.42} & \textbf{0.87 $\pm$ 0.56} & \textbf{0.15 $\pm$ 0.03}\\
    \bottomrule
    \end{tabular}}
    \label{tab:yacht}
\end{minipage}
\end{table}

\begin{table}[ht!]
\begin{minipage}[t!]{0.5\textwidth}
    \caption{UCI: Year Prediction MSD}
    \centering
    \scalebox{0.66}{
    \begin{tabular}{lcccc}
    \toprule
    \textbf{Method} & \textbf{NLL ($\downarrow$)} & \textbf{RMSE ($\downarrow$)} & \textbf{Cal ($\downarrow$)} & \textbf{Sharp ($\downarrow$)}\\
    \midrule
    Deterministic & 3.49 $\pm$ 0.01 & 9.65 $\pm$ 0.07 & 0.06 $\pm$ 0.04 & 0.46 $\pm$ 0.02\\
    Quantile & 3.45 $\pm$ 0.01 & 9.46 $\pm$ 0.04 & 0.12 $\pm$ 0.02 & 0.37 $\pm$ 0.01\\
    MC~Dropout & 3.50 $\pm$ 0.01 & 10.3 $\pm$ 0.08 & 0.07 $\pm$ 0.01 & 57.5 $\pm$ 4.49 \\
    MFVI-BNN & 4.21 $\pm$ 0.10 & 11.5 $\pm$ 0.97 & 0.25 $\pm$ 0.04 & 26.1 $\pm$ 5.12\\
    EDL & 4.09 $\pm$ 0.22 & 11.2 $\pm$ 0.90 & 0.22 $\pm$ 0.08 & 21.1 $\pm$ 8.05 \\
    SNGP & 3.52 $\pm$ 0.01 & 11.2 $\pm$ 0.17 & 0.12 $\pm$ 0.08 & 0.58 $\pm$ 0.01\\
    DUE & 3.53 $\pm$ 0.06 & 11.3 $\pm$ 0.81 & 0.10 $\pm$ 0.03 & 0.59 $\pm$ 0.04\\
    Ensembles & \textbf{3.35 $\pm$ 0.01} & \textbf{9.31 $\pm$ 0.03} & 0.06 $\pm$ 0.03 & 0.48 $\pm$ 0.02\\
    \textbf{Ours} & 3.39 $\pm$ 0.00 & 9.57 $\pm$ 0.02 & \textbf{0.04 $\pm$ 0.02} & \textbf{0.40 $\pm$ 0.01}\\
    \bottomrule
    \end{tabular}}
    \label{tab:year}
\end{minipage}
\begin{minipage}{0.5\textwidth}
    \vspace{-0.2in}
    \begin{center}
      \includegraphics[width=0.8\linewidth]{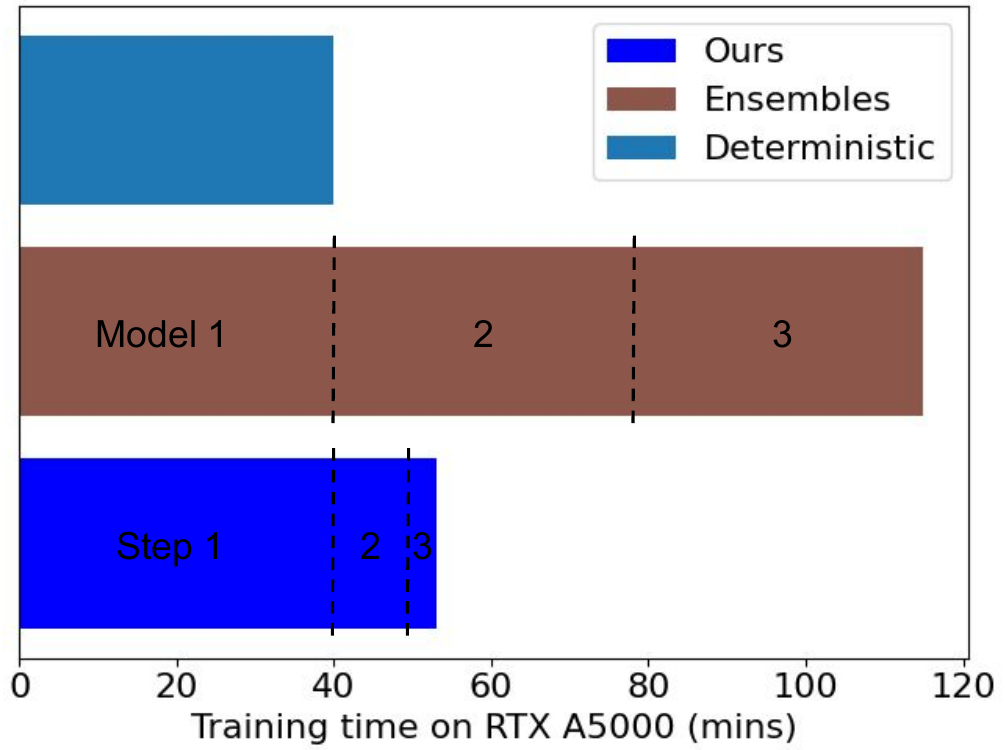}
    \end{center}
    \vspace{-0.1in}
    \caption{Training time comparison between Deterministic, Ensembles, and Ours on the depth estimation setting.} 
    \label{fig:training-time}
    \vspace{-1in}
\end{minipage}
\end{table}

\end{document}